\newcolumntype{d}{D{.}{.}{4}} 
\newtheorem{theorem}{Theorem}
\newtheorem{lemma}{Lemma}
\begin{document}
%%%%%%%%% TITLE
\title{GeoDA: a geometric framework for black-box adversarial attacks}
% \title{A Geometric Framework for Black-box Adversarial Attacks}
%\author{authors}

\author{Ali Rahmati$^*$, Seyed-Mohsen Moosavi-Dezfooli$^\dagger$, Pascal Frossard$^\ddagger$, and Huaiyu Dai$^*$\\ \\
$^*$Department of ECE, North Carolina State University \\ 
$^\dagger$Institue for Machine Learning,
ETH Zurich\\
$^\ddagger$Ecole Polytechnique Federale de
Lausanne \\ {\tt\small arahmat@ncsu.edu, seyed.moosavi@inf.ethz.ch, pascal.frossard@epfl.ch, hdai@ncsu.edu}}

% For a paper whose authors are all at the same institution,
% omit the following lines up until the closing ``}''.
% Additional authors and addresses can be added with ``\and'',
% just like the second author.
% To save space, use either the email address or home page, not both
% \and
%  Seyed-Mohsen Moosavi-Dezfooli\\
% Institue for Machine Learning\\
% ETH Zurich, Switzerland\\
% {\tt\small seyed.moosavi@inf.ethz.ch}
% \and
% Pascal Frossard\\
% Ecole Polytechnique Federale de Lausanne\\
% Lausanne, Switzerland\\
% {\tt\small pascal.frossard@epfl.ch}
% \and
% Huaiyu Dai\\
% North Carolina State University\\
% Raleigh, NC, US \\
% {\tt\small hdai@ncsu.edu}
% }

\maketitle

%\thispagestyle{empty}
%%%%%%%%% ABSTRACT
\begin{abstract}
Adversarial examples are known as carefully perturbed images fooling image classifiers. 
We propose  a geometric framework to generate adversarial examples in one of the most challenging black-box settings where the adversary can only generate a small number of queries, each of them returning the top-$1$ label of the classifier. Our framework is based on the observation that the decision boundary of deep  networks usually has a small mean curvature in the vicinity of data samples. We propose an effective iterative algorithm to generate query-efficient black-box perturbations with small $\ell_p$ norms for $p \ge 1$, which is confirmed via experimental evaluations on state-of-the-art natural image classifiers. Moreover, for $p=2$, we theoretically show that our algorithm actually converges to the minimal $\ell_2$-perturbation when the curvature of the decision boundary is bounded. We also obtain the optimal distribution of the queries over the iterations of the algorithm. Finally, experimental results  confirm that our principled black-box attack algorithm performs better than state-of-the-art algorithms  as it generates smaller perturbations with a reduced number of queries.\footnote{The code of GeoDA is available at \url{https://github.com/thisisalirah/GeoDA}.} 
\end{abstract}

%Recent works have shown that such examples can even be generated in black-box settings where the weights of the classifier are not available. In this paper, 
%%%%%%%%% BODY TEXT
\vspace{-5mm}
\section{Introduction}

It has become well known that deep neural networks are vulnerable to small adversarial perturbations, which are carefully designed to cause miss-classification in state-of-the-art image classifiers~\cite{szegedy2013intriguing}.
Many methods have been proposed to evaluate adversarial robustness of classifiers in the white-box setting, where the adversary has full access to the target model~\cite{goodfellow2014explaining, moosavi2016deepfool, carlini2017towards}. However, the robustness of classifiers in black-box settings -- where the adversary has only access to the output of the classifier -- is of high relevance in many real-world applications of deep neural networks such as autonomous systems and healthcare, where it poses serious security threats. Several black-box evaluation methods have been proposed in the literature. Depending on what the classifier gives as an output, black-box evaluation methods are either score-based~\cite{narodytska2016simple, chen2017zoo, ilyas2018black} or decision-based~\cite{chen2019boundary, brendel2017decision, liu2019geometry}.
 % target model is limited \cite{papernot2017practical}. The score-based \cite{narodytska2016simple, chen2017zoo, ilyas2018black} and decision-based attacks \cite{chen2019boundary, brendel2017decision, liu2019geometry} are two types of black box attacks.
%In score-based attacks, one can have access to the output probabilities, while in decision-based attacks it only has access to the top-$k$ labels. 

  \begin{figure}[!t]
	\centering
	
	\includegraphics[width=0.47\textwidth]{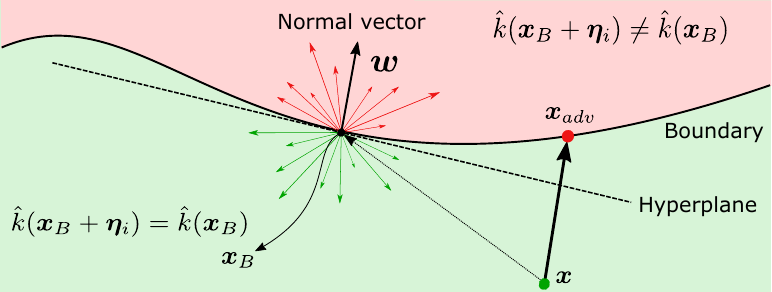}
	\label{fig:antenna_LMS1}\vspace{-1.5mm}
	\caption{Linearization of the decision boundary. }
	\label{fig2}
\end{figure}

In this paper, we propose a novel geometric framework for decision-based black-box attacks in which the adversary only has access to the \textit{top-$1$ label} of the target model. Intuitively small adversarial perturbations should be searched in directions where the classifier decision boundary comes close to data samples. We exploit the low mean curvature of the decision boundary in the vicinity of the data samples  to effectively estimate the normal vector to the decision boundary. This key prior permits to considerably reduces the number of queries that are necessary to fool the black-box classifier. Experimental results confirm that  our Geometric Decision-based Attack (GeoDA) outperforms state-of-the-art black-box attacks, in terms of required number of queries  to fool the classifier.
Our main contributions are summarized as follows:

\begin{itemize}
    % \item   for the black-box decision-based attacks resorting to the low mean curvature of the decision boundary in the vicinity of the data samples. We deploy geometric priors to estimate  the normal vector to the decision boundary that considerably reduces the number of required queries. 

    \item We propose a novel  geometric framework based on linearizing the decision boundary of deep networks in the vicinity of samples. The error for the estimation of the  normal vector to the decision boundary of classifiers with flat decision boundaries, including linear classifiers, is shown to be bounded in a non-asymptotic regime. The proposed framework is general enough to be deployed for any classifier with low curvature decision boundary.
    
    \item We demonstrate how our proposed framework can be used to generate query-efficient $\ell_p$ black-box perturbations. In particular, we provide algorithms to generate perturbations for $p \ge 1$, and show their effectiveness via experimental evaluations on state-of-the-art natural image classifiers. In the case of $p=2$, we also prove that our algorithm  converges to the minimal $\ell_2$-perturbation. We further derive the optimal number of queries for each step of the iterative search strategy.
    
    %\item furthermore, our framework can incorporate different prior information, particularly transferability and subspace.
    % \item the error for classifiers with flat decision boundaries (including linear classifiers) is shown to be bounded (in a non-asymptotic way).
    % \item Our framework can be used for $l_p$. For $p=2$, experiment + theory, which we derive the optimal number of queries per iteration.
    
    \item Finally, we show that our framework can incorporate different prior information, particularly transferability and subspace constraints on the adversarial perturbations. We show theoretically that having prior information can bias the normal vector estimation search space towards a more accurate estimation. 
\end{itemize}

\section{Related work}

Adversarial examples can be crafted  in white-box setting~\cite{goodfellow2014explaining, moosavi2016deepfool, carlini2017towards}, score-based black-box setting~\cite{narodytska2016simple, chen2017zoo, ilyas2018black} or decision-based black-box scenario~\cite{chen2019boundary, brendel2017decision, liu2019geometry}. The latter settings are obviously the most challenging as little is known about the target classification settings.
Yet, there are several recent works on the black-box attacks on image classifiers  \cite{ilyas2018black,ilyas2018prior,tu2018autozoom}. However, they assume that the loss function, the prediction probabilities, or several top sorted labels are available, which may be unrealistic in many real-world scenarios. In the most challenging settings, there are a few attacks that exploit only the top-$1$ label information returned by the classifier, including the Boundary Attack (BA)~ \cite{brendel2017decision}, the HopSkipJump Attack (HSJA) \cite{chen2019hopskipjumpattack}, the OPT attack \cite{cheng2018query}, and qFool \cite{liu2019geometry}. In \cite{brendel2017decision}, by starting from a large adversarial perturbation, BA can iteratively reduce the norm of the perturbation.  In \cite{chen2019hopskipjumpattack}, the authors provided an attack based on \cite{brendel2017decision} that improves the BA taking the advantage of an estimated gradient. This attack is quite query efficient and can be assumed as the state-of-the-art baseline in the black-box setting. In \cite{cheng2018query},  an optimization-based hard-label black-box attack algorithm is introduced with guaranteed convergence rate in the hard-label black-box setting which outperforms the BA in terms of number of queries. Closer to our work, in \cite{liu2019geometry}, a heuristic algorithm based on the estimation of the normal vector to decision boundary is proposed for the case of $\ell_2$-norm perturbations. 

Most of the aforementioned attacks are however specifically designed for minimizing perturbation metrics such $\ell_2$ and $\ell_\infty$ norms, and mostly use heuristics. In contrast, we introduce a powerful and generic framework grounded on the geometric properties of the decision boundary of deep networks, and propose a principled approach to design efficient algorithms to generate general $\ell_p$-norm perturbations, in which \cite{liu2019geometry} can be seen as a special case. We also provide convergence guarantees for the $\ell_2$-norm perturbations. We  obtained the optimal distribution of queries over iterations theoretically as well which permits to use the queries in a more efficient manner.  Moreover, the parameters of our algorithm are further determined via empirical and theoretical analysis, not merely based on heuristics as done in~\cite{liu2019geometry}.
%Most of the aforementioned attacks are however specifically designed for minimizing perturbation metrics such $\ell_2$ and $\ell_\infty$ norms, and mostly use heuristics. In our work, we introduce a powerful and generic framework grounded on geometric priors. It can be adapted for  various perturbation norms including also $\ell_1$, etc. \red{However, we propose a principled approach to design efficient algorithms to generate $\ell_p$-norm perturbations, in which \cite{liu2019geometry} can be seen as a special case. We also provide convergence guarantees for the $\ell_2$-norm perturbations. Moreover, the parameters of our algorithm are further determined via empirical and theoretical analysis, not merely based on heuristics as it is done in~\cite{liu2019geometry}.}
 
%  \redp{it is important to outline more clearly the difference with \cite{liu2019geometry} in some part of the paper - could be experiments, intro, or related work.}
 
\vspace{-2mm}

 \section{Problem statement}
 
%  \red{THE FIRST PAR SHOULD BE REPHRASED TO FOLLOW THIS ORDER: first talk about minimal adversarial perturbations w.r.t. a given metric D, and say D can be many different things SSIM, $l_p$, etc. Then talk about our black-box setting where $\theta$ is unknown.}
%  \red{$\hat k(\bm{x}+\bm{v})\ne \hat k(\bm{x})$:FIX THE NOTATION, NOT VERY ACCURATE}.
%   In such setting, one can generate perturbations by observing the output of the classifier for different images}.
%  \red{the black-box attack optimization problem: MAKE SURE THE REVIEWER KNOWS WHICH PROBLEM YOU ARE TALKING ABOUT.}

 Let us assume that we have a pre-trained $L$-class classifier with parameters $\theta$ represented as $f : \mathbb{R}^d \rightarrow \mathbb{R}^L $, where ${\bm x} \in \mathbb{R}^d$
is the input
image and  $\hat k ({\bm x}) = \textrm{argmax}_k ~ f_{ k}({\bm x})$ is the top-$1$ classification label where $f_k({\bm x} )$ is the $k^\textrm{th}$ component of $f({\bm x} )$ corresponds to the $k^\textrm{th}$ class.
We consider the non-targeted black-box attack, where
an adversary without any knowledge on $\theta$ computes an adversarial perturbation $\bm v$ to
change the estimated label of an image ${\bm x} $ to any incorrect label, i.e., $\hat k({\bm x} +\bm{v})\ne \hat k({\bm x} )$.  
 The distance metric $\mathcal D ({\bm x} , {\bm x}  + \bm v)$ can be any function including the $\ell_p$ norms.  We assume a general form optimization problem in which the goal is to fool the classifier while $\mathcal D ({\bm x} , {\bm x}  + \bm v)$ is minimized as:
 \vspace{-1mm}
 \begin{equation}\label{ffdd}
\begin{array}{rrclcl}
\displaystyle \min_{\bm v} & \multicolumn{3}{l}{\mathcal{D}({\bm x} , {\bm x}  + \bm v)}\\
\textrm{s.t.} & \hat k({\bm x} +\bm{v})\ne \hat k({\bm x} ).\\
\end{array}
\end{equation}
Finding a solution for ~\eqref{ffdd} is a hard problem in general. To obtain an efficient approximate solution, one can try to estimate the point of the classifier decision boundary that is the closest to the data point ${\bm x} $. Crafting an small adversarial perturbation then consists in pushing the data point beyond the decision boundary in the direction of its normal. The normal to the decision boundary is thus critical in a geometry-based attack. While it can be obtained using back-propagation in white box settings (e.g., \cite{moosavi2016deepfool}), its estimation in black-box settings becomes challenging.

The key idea here is to exploit the geometric properties of the decision boundary in deep networks for effective estimation in black-box settings. In particular, it has been shown that the decision boundaries of the state-of-the-art deep networks have a quite low mean curvature in the neighborhood of data samples~\cite{fawzi2016robustness}. Specifically, %for a datapoint ${\bm x} $ and its  corresponding   adversarial perturbation $\bm{v}$ with distance $\mathcal{D}({\bm x} , {\bm x}  + \bm v)$, 
the decision boundary at the vicinity of a data point ${\bm x} $ can be locally approximated by a hyperplane passing through a boundary point $\bm{x}_B$ close to  ${\bm x} $, with a normal vector $\bm{w}$~\cite{fawzi2018empirical, fawzi2017robustness}. Thus, by exploiting this property, the optimization problem in~\eqref{ffdd} can be locally linearized as:
 \begin{equation}\label{ffdd1}
\begin{array}{rrclcl}
\displaystyle \min_{\bm v} & \multicolumn{3}{l}{\mathcal{D}({\bm x} , {\bm x}  + \bm v)}\\
\textrm{s.t.} & \bm w^T({\bm x}  + \bm v) - \bm w^T \bm x_B =0\\
\end{array}
\end{equation}
Typically, $\bm x_B$ is a point on the boundary, which can be found by binary search with a small number of queries. 
However, solving the problem  \eqref{ffdd1} is quite challenging  in black-box settings  as one does not have any knowledge about the parameters $\theta$ and can only access the top-1 label $\hat k({\bm x})$ of the image classifier. A \textit{query} is a request that results in the top-1 label of an image classifier for a given input, which prevents the use of zero-order black box optimization methods~\cite{zhao2019design, tu2019autozoom} that need more information to compute adversarial perturbations. The  goal of our method is to estimate the normal vector to the decision boundary $\bm w$ resorting to geometric priors with a minimal number of queries to the classifier.

\section{The estimator}
\label{sect:estimator}
We introduce an estimation method for the normal vector of classifiers with flat decision boundaries. It is worth noting that the proposed estimation is not limited to deep networks and applies to any  classifier with low mean curvature boundary. We denote the estimate of the  vector $\bm w$ normal to the flat decision boundary in~\eqref{ffdd1} with $\hat{\bm w}_N$ when $N$ queries are used. 
 Without loss of generality, we assume that the boundary point $\bm x_B$ is located at the origin. Thus, according to \eqref{ffdd1}, the decision boundary hyperplane passes through the origin and we have $\bm{w}^T \bm{x}=0$ for any vector $\bm{x}$ on the decision boundary hyperplane. In order to estimate the normal vector to the decision boundary, the key idea is to generate $N$ samples $\bm{\eta}_i,~ i\in \{1, \dots, N\}$ from a multivariate normal distribution $\bm{\eta}_i \sim \mathcal{N}(\bm{0},\bm\Sigma)$. Then, we query the image classifier $N$ times to obtain  the top-1 label output for each $\bm{x}_B+\bm{\eta}_i,~\forall i \in N$. For a given data point $\bm x$, if $\bm{w}^T \bm{x} \le 0$, the label is correct; if $\bm{w}^T \bm{x} \ge 0$,   the classifier is fooled. Hence, if the generated perturbations are adversarial, they belong to the set 
 \vspace{-1.5mm}
\begin{align}
        \mathcal{S}_{\textrm{adv}}& =  \{ \bm{\eta}_i \ | \ \hat k(\bm{x}_B+\bm{\eta}_i)\ne \hat k(\bm{x} ) \} \nonumber
    \\&= \{ \bm{\eta}_i \ | \ \bm{w}^T\bm{\eta}_i \ge 0 \}.
\end{align}
Similarly, the perturbations on the other side of the hyperplane, which lead to correct classification, belong to the set 
\begin{align}
        \mathcal{S}_{\textrm{clean}}& =  \{ \bm{\eta}_i \ | \ \hat k(\bm{x}_B+\bm{\eta}_i)= \hat k(\bm{x} ) \} \nonumber
    \\&= \{ \bm{\eta}_i \ | \  \bm{w}^T\bm{\eta}_i \le 0 \}.
\end{align}
The samples in each of the sets $\mathcal{S}_{\textrm{adv}}$ and $\mathcal{S}_{\textrm{clean}}$ can be assumed as samples drawn from a hyperplane ($\bm{w}^T \bm{x}=0$) truncated multivariate normal distribution with mean $\bm{0}$ and covariance matrix $\bm\Sigma$. We define the PDF  of the $d$ dimensional  zero mean multivariate normal distribution with covariance matrix $\bm\Sigma$ as $\phi_d(\bm{\eta}|\bm\Sigma)$. We define $ \Phi_d(\bm{b}|\bm\Sigma)=\int_{\bm{b}}^{\infty}\phi_d(\bm{\eta}|\bm\Sigma) d \bm{\eta}$    as  cumulative
distribution function of the univariate normal distribution.
% \begin{equation}
%     \Phi_d(\bm{b}|\bm\Sigma)=\int_{\bm{b}}^{\infty}\phi_d(\bm{\eta}|\bm\Sigma) d \bm{\eta}
% \end{equation}
% The set $\mathcal{S}_{\textrm{adv}}$ can be assumed as a  heyperplane $\bm{w}^T\bm x \ge 0$ truncated distribution.
\begin{lemma}
 Given a multivariate Gaussian distribution  $\mathcal{N}(\bm{0},\bm\Sigma)$ truncated by the hyperplane $\bm{w}^T\bm x \ge 0$, the mean $\bm \mu$ and covariance matrix $\bm R$ of the hyperplane truncated distribution are given by:
\begin{equation}\label{eq6}
    \bm{\mu}=c_1 \bm\Sigma \bm{w}
\end{equation}
where  $c_1=(\Phi_d(0))^{-1}\phi_d(0)$ and the covariance matrix $ \bm{R}=\bm\Sigma-\bm{\Sigma}\bm{w}\bm{w}^T\bm{\Sigma}(\Phi_d(0)^2 \gamma^2)^{-1} \phi_d(0))d^2(0)$
% \begin{equation}
%     \bm{R}=\bm\Sigma-\bm{\Sigma}\bm{w}\bm{w}^T\bm{\Sigma}(\Phi(0)^2 \gamma^2)^{-1} \phi^2(0)
% \end{equation}
in which  $\gamma=(\bm w^T \bm\Sigma\bm w)^{\frac{1}{2}}$~\cite{tallis1965plane}.
\end{lemma}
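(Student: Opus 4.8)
The plan is to reduce the $d$-dimensional plane-truncation problem to a one-dimensional half-normal computation and then push the moments back through the reduction. First I would \emph{whiten}: put $\bm z=\bm\Sigma^{-1/2}\bm\eta$, so $\bm z\sim\mathcal N(\bm 0,\bm I)$ and the truncation set $\{\bm w^T\bm\eta\ge 0\}$ becomes $\{\tilde{\bm w}^T\bm z\ge 0\}$ with $\tilde{\bm w}=\bm\Sigma^{1/2}\bm w$ and $\|\tilde{\bm w}\|=\gamma$. Then I would \emph{rotate}: choose an orthogonal matrix $\bm Q$ whose first row is $\tilde{\bm w}^T/\gamma$, and set $\bm y=\bm Q\bm z\sim\mathcal N(\bm 0,\bm I)$; the constraint now reads simply $y_1\ge 0$. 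Since the standard Gaussian density factorizes over coordinates and the constraint touches only $y_1$, the truncated vector $\bm y$ has $y_1$ distributed as a standard half-normal, with $y_2,\dots,y_d$ still i.i.d.\ $\mathcal N(0,1)$ and independent of $y_1$.

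In the $\bm y$-coordinates the moments are elementary: $\mathbb E[y_1]=\sqrt{2/\pi}$, $\mathbb E[y_i]=0$ for $i\ge 2$, $\mathrm{Var}(y_1)=1-2/\pi$, $\mathrm{Var}(y_i)=1$, and every off-diagonal covariance vanishes, so $\mathbb E[\bm y]=\sqrt{2/\pi}\,\bm e_1$ and $\mathrm{Cov}(\bm y)=\bm I-\tfrac{2}{\pi}\bm e_1\bm e_1^T$. Undoing the reduction via $\bm\eta=\bm\Sigma^{1/2}\bm Q^T\bm y$ and using $\bm Q^T\bm e_1=\tilde{\bm w}/\gamma=\bm\Sigma^{1/2}\bm w/\gamma$ gives
\begin{equation*}
\bm\mu=\sqrt{\tfrac{2}{\pi}}\;\bm\Sigma^{1/2}\bm Q^T\bm e_1=\tfrac{1}{\gamma}\sqrt{\tfrac{2}{\pi}}\;\bm\Sigma\bm w,
\qquad
\bm R=\bm\Sigma-\tfrac{2}{\pi\gamma^{2}}\,\bm\Sigma\bm w\bm w^T\bm\Sigma .
\end{equation*}
Matching the statement's notation, the reduced variable $\bm w^T\bm\eta\sim\mathcal N(0,\gamma^2)$ is truncated exactly at its mean, so $\Phi_d(0)=\tfrac12$ and $\phi_d(0)=(\gamma\sqrt{2\pi})^{-1}$, whence $c_1=\Phi_d(0)^{-1}\phi_d(0)=\tfrac1\gamma\sqrt{2/\pi}$ and $c_1^{2}=2/(\pi\gamma^{2})$; this reproduces $\bm\mu=c_1\bm\Sigma\bm w$ and $\bm R=\bm\Sigma-c_1^{2}\,\bm\Sigma\bm w\bm w^T\bm\Sigma$. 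Equivalently, one may simply quote the general plane-truncation moment identities of Tallis~\cite{tallis1965plane} and specialize the truncation constant to $0$.

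I expect the genuine difficulty here to be bookkeeping rather than mathematics. The one point needing care is the reading of $\phi_d(0)$ and $\Phi_d(0)$: despite the subscript $d$, they must be taken as the density and upper-tail probability of the \emph{one-dimensional} marginal $\bm w^T\bm\eta$ (so the $1/\gamma$ scaling is hidden inside $c_1$), which is exactly the content of the reduction above. The only other thing to verify is that the rotation $\bm Q$ keeps the $d-1$ directions orthogonal to $\tilde{\bm w}$ jointly Gaussian and independent of $y_1$, so that no cross-terms appear when $\mathrm{Cov}(\bm y)$ is transported back through $\bm\Sigma^{1/2}\bm Q^T$; the half-normal moments $\sqrt{2/\pi}$ and $1-2/\pi$, the whitening, and the final congruence transforms are then entirely routine.
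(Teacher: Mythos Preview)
Your argument is correct. The whitening $\bm z=\bm\Sigma^{-1/2}\bm\eta$ followed by the rotation $\bm Q$ that sends $\tilde{\bm w}/\gamma$ to $\bm e_1$ cleanly reduces the problem to a single half-normal coordinate, and the back-transformation $\bm\eta=\bm\Sigma^{1/2}\bm Q^T\bm y$ recovers exactly $\bm\mu=\gamma^{-1}\sqrt{2/\pi}\,\bm\Sigma\bm w$ and $\bm R=\bm\Sigma-\tfrac{2}{\pi\gamma^2}\bm\Sigma\bm w\bm w^T\bm\Sigma$. Your identification of $\Phi_d(0)=\tfrac12$ and $\phi_d(0)=(\gamma\sqrt{2\pi})^{-1}$ as univariate quantities for the marginal $\bm w^T\bm\eta$ is the right reading of the paper's (admittedly imprecise) notation, and it makes $c_1$ and $c_1^2$ match your constants.

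As for comparison: the paper does not actually prove Lemma~1. It simply states the formulas and attaches the citation~\cite{tallis1965plane}, deferring entirely to Tallis's general plane-truncation moment identities. Your last sentence already anticipates this (``one may simply quote \ldots\ Tallis and specialize the truncation constant to $0$''), so you have both the paper's route and a self-contained derivation that the paper omits. The only thing your write-up adds over the paper is content; nothing is missing.
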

% \begin{proof} The mean and covariance of the plane truncated multivariate Gaussian distribution can be found in~\cite{tallis1965plane}.
% \end{proof}
% As we deploy a normalized version of the gradient estimation in GeoDA, the constant $c_1$ does not have any impact.
%  It is important to notice that by a suitable normalization of the coefficients, the above results can be further simplified. Suppose that we normalize all the elements of $\bm{a}^T\bm\eta \ge t$ is normalized as $w_i=a_i/\sqrt{\bm a^T \bm a}$ and $p=t/\sqrt{\bm a^T \bm a}$ which results in the form $\bm w^T \eta \ge p$, where $\bm w^T \bm w=1$.
% \red{Theorem 1 can be postponed to the appendix, just mention Proposition 1 without proof.}
As it can be seen in \eqref{eq6}, the mean is a function of both the covariance matrix $\bm\Sigma$ and $\bm w$. Our ultimate goal is to estimate the normal vector to the decision boundary. In order to recover $\bm w$ from $\bm \mu$, a sufficient condition is to choose $\bm \Sigma$ to be a full rank matrix.
\paragraph{General case} We first consider the case where no prior information on the search space is available. The matrix $\bm \Sigma=\sigma \mathcal{I}$ can be a simple choice to avoid  unnecessary computations. The direction of the mean of the truncated distribution is an estimation for the direction of hyperplane normal vector as $\bm{\mu}=c_1 \sigma \bm{w}$.
% \begin{equation}
%     \bm{\mu}=c_1 \sigma \bm{w}.
% \end{equation}
The covariance matrix of the truncated distribution  is $\bm{R}=\sigma \mathcal{I}+c_2\bm{w}\bm{w}^T$
where $c_2=-\sigma^2(\Phi_d(0))^{-2} \phi_d^2(0)$.
As the  samples in both of the sets $\mathcal{S}_{\textrm{adv}}$ and $\mathcal{S}_{\textrm{clean}}$ are hyperplane truncated Gaussian distributions, the 
 same estimation can be applied  for the samples in the set $\mathcal{S}_{\textrm{clean}}$ as well. Thus, by multiplying the samples in $\mathcal{S}_{\textrm{clean}}$ by $-1$ and we can use them to approximate the desired gradient to have a more efficient estimation.
Hence, the problem is reduced to the estimation of the mean of the $N$ samples drawn from the hyperplane truncated distribution with mean $\bm \mu$ and covariance matrix $\bm R$. As a result, the estimator $ \bar {\bm \mu}_N$  of $\bm \mu$ with $N$ samples is $ \bar {\bm \mu}_N=\frac{1}{N}{\sum_{i=1}^N\rho_i \bm \eta_i}$,
% \begin{equation}\label{estim}
%  \bar {\bm \mu}_N=\frac{1}{N}{\sum_{i=1}^N\rho_i \bm \eta_i}
% \end{equation}
where
\begin{equation}
    \rho_i = \left\{\begin{matrix}
1 & \hat k(\bm{x}_B+\bm{\eta}_i)\ne \hat k(\bm{x} )\\ 
 -1& \hat k(\bm{x}_B+\bm{\eta}_i) = \hat k(\bm{x} ).
\end{matrix}\right.
\end{equation}
The normalized direction of the normal vector of the boundary can be obtained as:
\vspace{-2mm}
\begin{equation}\label{normal}
    \hat {\bm w}_N= \frac{\bar {\bm \mu}_N }{ \| \bar {\bm \mu}_N \|_2}
\end{equation}
\paragraph{Perturbation priors} We now consider the case where priors on the perturbations are available. In black-box settings, having prior information can significantly improve the performance of the attack. Although the attacker does not have access to the weights of the classifier, it may have  some prior information about the data, classifier, etc.  \cite{ilyas2018prior}. Here, using $\bm \Sigma$, we can capture the prior knowledge for the estimation of the normal vector to the decision boundary. In the following, we unify the two common priors in our proposed estimator.
% \vspace{-3mm}
% %\subsubsection*{Sub-space}
% \paragraph{Sub-space priors}
In the first case, we have some prior information about the subspace in which we search for normal vectors, we can incorporate such information into $\bm \Sigma$ to have a more efficient estimation. For instance, deploying low frequency sub-space $\mathcal{R}^m$ in which $m\ll d$, we can generate a rank $m$ covariance matrix $\bm \Sigma$. 
Let us assume that $\mathcal S =\{\bm s_1, \bm s_2, ..., \bm s_m\}$ is an orthonormal  Discrete Cosine Transform (DCT) basis in the
$m$-dimensional subspace of the input space~\cite{guo2019simple}. In order to generate the samples from this low dimensional subspace, we use the following covariance matrix:
\begin{equation}
    \bm \Sigma=\frac{1}{m}\sum_{i=1}^{m} \bm s_i \bm s_i^T.
\end{equation}
The normal vector of the boundary can be obtained by plugging the modified $\bm \Sigma$ in \eqref{eq6}.
Second, we consider transferability priors. It has been observed that adversarial perturbations well transfer across different trained models~\cite{tramer2017space, moosavi2017universal, cheng2019improving}. Now, if the adversary further has full access to another model $\mathcal{T}'$, yet different than the target black-box model $\mathcal{T}$, it can take advantage of the transferability properties of adversarial perturbations. For a given datapoint, one can obtain the normal vector to the decision boundary in the vicinity of the datapoint for $\mathcal{T'}$, and bias the normal vector search space for the black-box classifier. Let us denote the transferred direction with unit-norm vector $\bm g$. By incorporating this vector into  $\bm \Sigma$, we can bias the search space as:
 \begin{equation}\label{eq222}
     \bm \Sigma={\beta}\mathcal I + (1-\beta) \bm g \bm g^T 
 \end{equation}
 where $\beta \in [0,1]$ adjusts the trade-off between exploitation and exploration. Depending on how confident we are about the utility of the transferred direction, we can adjust its  contribution  by tuning the value of $\beta$. Substituting \eqref{eq222} into \eqref{eq6}, after normalization to $c_1$, one can get  
 \begin{equation}\label{eq63}
    \bm{\mu}=  {\beta} \bm{w}+ (1-\beta) \bm g \bm g^T  \bm{w},
\end{equation}
where the first term is the estimated normal vector to the boundary and the second term is the projection of the estimated normal vector on the transferred direction $\bm g$.
Having incorporated the prior information into $\bm \Sigma$, one can generate perturbations $\bm{\eta}_i \sim \mathcal{N}(\bm{0},\bm\Sigma)$ with the modified $\bm \Sigma$ in an  effective search space, which leads to a more accurate estimation of normal to the decision boundary. 

% \redp{for the sake of completeness, it might be good to give $\mu$ and $\hat {\bm w}_N$ for the last two cases above?}

% \subsection{Non-asymptotic bound for the estimator}
% It is not necessary to have a subsection here.
\vspace{-2mm}
\paragraph{Estimator bound} Finally, we are interested in quantifying the number of samples that are necessary for estimating the normal vectors in our geometry inspired framework. Given a real i.i.d. sequence, using the central limit theorem, if the samples have a finite variance, an asymptotic bound can be provided for the estimate. However, this bound is not of our interest as it is only asymptotically correct.
We are interested in bounds of similar form with non-asymptotic inequalities as the number of queries is limited~\cite{lugosi2019sub, hanson1971bound}. 

\begin{lemma}
The mean estimation  $\bar {\bm \mu}_N$  deployed in \eqref{eq222} obtained from $N$ multivariate hyperplane truncated Gaussian queries satisfies the probability %with probability at least $1-\delta$ 
\begin{equation}\label{dava}
P \left(\|\bm{\bar \mu}_N - \bm{ \mu}\|\le \sqrt{\frac{\textrm{Tr}(\bm R)}{N}}+\sqrt{\frac{2\lambda_{\textrm{max}}\log(1/\delta)}{N}} \right) \ge 1-\delta
\end{equation}
where $\textrm{Tr}(\bm R)$ and $\lambda_{\textrm{max}}$ denote the trace and largest eigenvalue of the covariance
matrix $\bm R$, respectively. %\footnote{The proofs for Lemmas and Theorems can be found in Appendix section of supplementary material.}.
\end{lemma}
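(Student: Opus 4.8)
The plan is to combine a first-moment (variance) estimate for the averaged estimator with a Gaussian concentration-of-measure inequality, exploiting the fact that a half-space--truncated Gaussian is a Lipschitz image of a standard Gaussian. First I would record that, since the $\bm\eta_i\sim\mathcal N(\bm 0,\bm\Sigma)$ are i.i.d.\ and $\rho_i$ is the sign of $\bm w^T\bm\eta_i$, the reflection symmetry of the Gaussian makes the signed samples $\rho_i\bm\eta_i$ i.i.d.\ copies of the hyperplane-truncated Gaussian, which by Lemma~1 has mean $\bm\mu$ and covariance $\bm R$. Writing $\bar{\bm\mu}_N-\bm\mu=\frac1N\sum_{i=1}^N(\rho_i\bm\eta_i-\bm\mu)$ as an average of i.i.d.\ centered vectors gives $\mathbb E\|\bar{\bm\mu}_N-\bm\mu\|^2=\textrm{Tr}(\bm R)/N$, hence $\mathbb E\|\bar{\bm\mu}_N-\bm\mu\|\le\sqrt{\textrm{Tr}(\bm R)/N}$ by Jensen's inequality. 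This supplies the first term of \eqref{dava}; it remains to show $\|\bar{\bm\mu}_N-\bm\mu\|$ concentrates around its mean with the stated Gaussian tail.

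For the concentration step I would exhibit the truncated Gaussian as a Lipschitz function of a standard Gaussian. With $\bm\Sigma$ as in \eqref{eq222} (full rank when $\beta>0$), set $\bm u=\bm\Sigma^{1/2}\bm w/\|\bm\Sigma^{1/2}\bm w\|$ and, for $\bm z\sim\mathcal N(\bm 0,\mathcal I)$, define $T(\bm z)=\bm\Sigma^{1/2}\big(|\bm u^T\bm z|\,\bm u+(\mathcal I-\bm u\bm u^T)\bm z\big)$. A direct check in whitened coordinates shows $T(\bm z)$ has exactly the law of $\rho_i\bm\eta_i$: the component of $\bm\Sigma^{-1/2}T(\bm z)$ along $\bm u$ is a half-normal while its orthogonal component is an independent centered Gaussian on $\bm u^{\perp}$, which is precisely the half-space--truncated Gaussian after whitening. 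The inner map $\bm z\mapsto|\bm u^T\bm z|\,\bm u+(\mathcal I-\bm u\bm u^T)\bm z$ is $1$-Lipschitz (it folds a single coordinate), so $T$ is $\sqrt{\lambda_{\textrm{max}}(\bm\Sigma)}$-Lipschitz. Consequently, for i.i.d.\ standard Gaussians $\bm z_1,\dots,\bm z_N$, the map $G(\bm z_1,\dots,\bm z_N)=\big\|\frac1N\sum_{i=1}^N T(\bm z_i)-\bm\mu\big\|$ is $\sqrt{\lambda_{\textrm{max}}(\bm\Sigma)/N}$-Lipschitz on $\mathbb R^{Nd}$ (Cauchy--Schwarz on the $N$ blocks) and has the same distribution as $\|\bar{\bm\mu}_N-\bm\mu\|$.

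The Gaussian concentration inequality for Lipschitz functions then yields $P\big(G\ge\mathbb E\,G+t\big)\le\exp\!\big(-Nt^2/(2\lambda_{\textrm{max}}(\bm\Sigma))\big)$. Choosing $t=\sqrt{2\lambda_{\textrm{max}}\log(1/\delta)/N}$ makes the right-hand side equal to $\delta$, and combining with the first-moment bound $\mathbb E\,G\le\sqrt{\textrm{Tr}(\bm R)/N}$ gives $\|\bar{\bm\mu}_N-\bm\mu\|\le\sqrt{\textrm{Tr}(\bm R)/N}+\sqrt{2\lambda_{\textrm{max}}\log(1/\delta)/N}$ with probability at least $1-\delta$, which is \eqref{dava}.

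I expect the main obstacle to be the variance proxy in the deviation term: the Gaussian--Lipschitz route naturally produces $\lambda_{\textrm{max}}(\bm\Sigma)$, whereas \eqref{dava} is stated with $\lambda_{\textrm{max}}(\bm R)$, which is no larger since Lemma~1 writes $\bm R$ as $\bm\Sigma$ minus a positive-semidefinite rank-one term, so $\bm R\preceq\bm\Sigma$. Closing this gap would require a sharper estimate --- for instance a direct bound on the moment generating function of $\bm\theta^{T}(\rho_i\bm\eta_i-\bm\mu)$ that uses the lighter tail induced by the half-space truncation to replace $\bm\theta^{T}\bm\Sigma\bm\theta$ by $\bm\theta^{T}\bm R\bm\theta$ uniformly over unit $\bm\theta$ --- or else reporting the (still valid, marginally weaker) statement with $\bm\Sigma$ in place of $\bm R$. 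A secondary technicality is the singular case of the low-frequency/DCT prior, where $\bm\Sigma$ has rank $m<d$; there the same argument runs on the $m$-dimensional support of the measure, with $\lambda_{\textrm{max}}(\bm\Sigma)=1/m$.
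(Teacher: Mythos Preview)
Your argument is correct and takes a genuinely different route from the paper. The paper's proof invokes a black-box sub-Gaussian concentration inequality for the sample mean (their (21), citing \cite{hanson1971bound,lugosi2019sub}) and then verifies that the half-space--truncated Gaussian is sub-Gaussian by computing its moment generating function explicitly: after a Cholesky change of variables they show $\mathbb E[\exp(\lambda\langle \bm v,X\rangle)]=\exp(\tfrac12\lambda^2\bm v^T\bm\Sigma\bm v)\,\Phi(\lambda\bm w^T\bm\Sigma\bm v/\sigma)\le\exp(\tfrac12\lambda^2\bm v^T\bm\Sigma\bm v)$, and then appeal to the cited inequality. By contrast, you bypass the MGF computation entirely: you realize the signed sample $\rho_i\bm\eta_i$ as the $1$-Lipschitz ``fold-one-coordinate'' map composed with $\bm\Sigma^{1/2}$ applied to a standard Gaussian, and then use Gaussian Lipschitz concentration together with the Jensen bound $\mathbb E\|\bar{\bm\mu}_N-\bm\mu\|\le\sqrt{\textrm{Tr}(\bm R)/N}$. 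Your approach is more self-contained and makes the geometry transparent; the paper's approach is shorter once the cited inequality is accepted.

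On the variance proxy, your caveat is well placed --- and in fact the paper's own proof has the same gap you flag. The sub-Gaussian parameter they actually establish is $\bm\Sigma$ (indeed $\mathcal I$ in the case they compute), not $\bm R$, so the concentration term their verification supports is $\lambda_{\max}(\bm\Sigma)$, exactly as in your Lipschitz bound; the passage to $\lambda_{\max}(\bm R)$ in the statement is not justified in either proof. Since $\bm R\preceq\bm\Sigma$, both your bound and the paper's verified bound are slightly weaker than the stated \eqref{dava}, but for the downstream use (Theorems~2--3 only need a $\gamma/\sqrt{N}$ rate with some constant $\gamma$) this discrepancy is immaterial.
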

\begin{proof}
The proof can be found in Appendix A.
\end{proof}
This bound will be deployed in sub-section~\ref{l2p} to compute the optimal distribution of queries over iterations.

 \begin{algorithm}[t]
	
	\DontPrintSemicolon
	 \textbf{Inputs:} Original image $\bm x$, query budget $N$, $\lambda$, number of iterations $T$.
	 
	 \textbf{Output:} Adversarial example $\bm x_T$.

	 Obtain the optimal query distribution $N_t^*, \forall t$ by~\eqref{optq}.
	
	 Find a starting point on the boundary $\bm x_0 $.
	
	\For{$t=1:T$}{
		Estimate normal $\hat {\bm w}_{N^*_t}$ at $\bm x_{t-1}$ by $N_t^*$ queries.
		
			Obtain $\bm v_{t}$ according to \eqref{eq282}.
			
		$\hat r_t \gets \min\{r'>0\::\:\hat{k}(\bm x+r' \bm v_{t})\neq\hat{k}(\bm x)\}$

$	\bm x_{t} \gets \bm x + \hat r_{t} \hat {\bm w}_{N^*_t}$
	
% 	\red{Apply binary search on $\bm x_{t}$.:INACCURATE}
	
% 	$\bm x_B \gets \bm x_t$
	
	}

	\caption{{$\ell_p$ GeoDA (with optimal query distribution) for $p>1$}}
	\label{alg}
\end{algorithm}

 \section{Geometric decision-based attacks (GeoDA)}
 %A single iteration of the GeoDA algorithm is  shown in Fig. \ref{fig2}.
 Based on the estimator provided in Section~\ref{sect:estimator}, one can design efficient black-box evaluation methods. In this paper, we focus on the minimal $\ell_p$-norm perturbations, i.e., $\mathcal{D}(\bm x, \bm x + \bm v)= \|\bm v\|_p$. We first describe the general algorithm for $\ell_p$ perturbations, and then provide algorithms to find black-box perturbations for $p=1,2,\infty$. Furthermore, for $p=2$, we prove the convergence of our method. The linearized optimization problem in~\eqref{ffdd1} can be re-written as
 \begin{align}\label{optimiz1}
 &    \hspace{8mm}  \min_{\bm v} 
 & &  \hspace{- 10mm} \|\bm v\|_p \nonumber \\
 & \hspace{9mm} {\text{ s.t.}}
& & \hspace{-10mm} \bm w^T(\bm x + \bm v) - \bm w^T \bm x_B =0. 
\end{align}
In the black-box setting, one needs to estimate $\bm x_B$ and $\bm w$ in order to solve this optimization problem. The boundary point $\bm x_B$ can be found using a similar approach as~\cite{liu2019geometry}. Having $\bm x_B$, one then use the process described in Section~\ref{sect:estimator} to compute the estimator of $\bm w$ -- i.e.,$\hat{\bm w}_{N_1}$ -- by making $N_1$ queries to the classifier.
In the case of $p=2$, the estimated direction $\hat{\bm w}_N$ is indeed the direction of the minimal perturbation. This process is depicted in Fig.~\ref{fig2}. 

If the curvature of the decision boundary is exactly zero, the solution of this problem gives the direction of the minimal $\ell_p$ perturbation. However, for deep neural networks, even if $N\rightarrow\infty$, the obtained direction is not completely aligned with the minimal perturbation as these networks still have a small yet non-zero curvature (see Fig.~\ref{fig2c}).  Nevertheless, to overcome this issue, the solution $\bm v^*$ of~\eqref{optimiz1} can be used to obtain a  boundary point $\bm x_1=\bm x+ \hat r_1\bm v^*$ to the original image $\bm x$ than $\bm x_0$, for an appropriate value of $\hat r_1>0$. For notation consistency, we define $\bm x_0 = \bm x_B$. Now, we can again solve~\eqref{optimiz1} for the new boundary point $\bm x_1$. Repeating this process results in an iterative algorithm to find the minimal $\ell_p$ perturbation, where each iteration corresponds to solving~\eqref{optimiz1} once.
Formally, for a given image $\bm x $, let $\bm x_t$ be the boundary point estimated in the  iteration $t-1$. Also, let $N_t$ be the number of queries used to estimate the normal to the decision boundary $\hat {\bm w}_{N_t}$ at the iteration $t$. Hence, the (normalized) solution to~\eqref{optimiz1} in the $t$-th iteration, $\bm v_t$, can be written in closed-form as:
\vspace{-2mm}
   \begin{equation}\label{eq282}
     \bm v_{t}=\frac{1}{\|\hat {\bm w}_{N_t}\|_{\frac{p}{p-1}}} \odot \textrm{sign}(\hat {\bm w}_{N_t}),
 \end{equation} 
 for $p\in[1,\infty)$, where $\odot$ is the point-wise product. For the particular case of $p = \infty$, the solution of \eqref{eq282}  is simply reduced to:
    \begin{equation}\label{eq2822}
     \bm v_{t}=\textrm{sign}(\hat {\bm w}_{N_t}).
 \end{equation}
The cases of the $p=1, 2$ are presented later. In all cases, $\bm x_t$ is then updated according to the following update rule:
 \begin{equation}\label{eeqqeeqq}
     \bm x_t=\bm x + \hat r_t \bm v_{t}
 \end{equation}
 where $\hat r_{t}$ can be found using an efficient line search along $\bm v_{t}$. The general algorithm is summarized in Alg.~\ref{alg}.

%The algorithm is summarized in Alg.~\ref{alg}.
%One can repeat the aforementioned process for multiple iterations, where each iteration corresponds to solving an optimization problem in the form of~\eqref{optimiz1} for 
%is depicted in Fig.~\ref{fig2}. 
%In practice, although we  know  the curvature $\kappa$ of the boundary is  small, it still is not perfectly linear. To improve the performance of the algorithm, the normal vector to the decision boundary in the previous section can be estimated in an iterative manner in order to compensate for the effect of the curvature. We assume $\mathcal{D}(\bm x, \bm x + \bm v)= \|\bm v\|_p$. Here, the optimization problem for a general $\ell_p$-norm is simplified to:
%  having an original image $\bm x $, one iteration of the GeoDA is described as:  the first step is to find a point $\bm x_B$ on the boundary. This can be done by adding random Gaussian noise along with fine-grained binary search. Second, the normal vector $ \hat{  \bm w}$ is estimated by \eqref{normal} and finally the original image is pushed towards the direction of  $ \hat{  \bm w}$ until it hits the boundary.

\subsection{$\ell_2$ perturbation}\label{l2p}
In the $\ell_2$ case,  the update rule of \eqref{eeqqeeqq}  is reduced to $ \bm x_{t}=\bm x  + \hat r_{t} \hat {\bm w}_{N_t}$
%  \begin{equation}\label{eq28}
%      \bm x_{t}=\bm x  + \hat r_{t} \hat {\bm w}_{N_t}
%  \end{equation}
 where $\hat r_{t}$ is the $\ell_2$ distance of $\bm x $ to the decision boundary at iteration $t$. 
We propose convergence guarantees and optimal distribution of queries over the successive iterations for this case.
%  \subsection{ Gradient with Momentum}
\vspace{-5mm}
 \paragraph{Convergence guarantees}
 We prove that GeoDA converges to the minimal $\ell_2$ perturbation given that the curvature of the decision boundary is bounded.
  We define the curvature of the decision boundary as $\kappa = \frac{1}{R}$, where $R$ is the radius of the largest open ball included in the region that intersects with the boundary $\mathcal{B}$~\cite{fawzi2016robustness}. In case $N\rightarrow\infty$, then $\hat r_t \to r_t$ where $r_t$ is assumed as exact distance required to push the image $\bm x$ towards the boundary at iteration $t$ with direction $\bm v_t$. The following Theorem holds:
  \begin{theorem}
Given a classifier with  decision boundary of bounded  curvature with $\kappa r~<~1$, the sequence $\{\hat r_t\}$   generated by Algorithm~\ref{alg} converges linearly to the  minimum $\ell_2$ distance $r$ since we have:
\vspace{-3mm}
\begin{equation}\label{lim}
    \lim_{t\to\infty} \frac{\hat r_{t+1}-r}{\hat r_t-r}= \lambda
\end{equation}
where $\lambda < 1$ is the convergence rate.
\end{theorem}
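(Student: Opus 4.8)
The plan is to set up a single geometric recursion relating the distance $\hat r_{t+1}$ to $\hat r_t$, exploiting the curvature bound $\kappa r < 1$ to control how far the linearized direction $\bm v_t$ at the boundary point $\bm x_t$ can deviate from the true minimal direction. Concretely, I would first fix the optimal perturbation $\bm v^\star$ with $\|\bm v^\star\|_2 = r$ and the associated boundary point $\bm x^\star = \bm x + \bm v^\star$. At iteration $t$ we have a boundary point $\bm x_t$ with $\|\bm x_t - \bm x\|_2 = \hat r_t$ (here I take $N\to\infty$ so the normal-estimation error vanishes and $\hat{\bm w}_{N_t}$ is the exact normal to $\mathcal B$ at $\bm x_t$; the finite-$N$ perturbation can be folded in afterward using Lemma 2). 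The key is that the decision boundary, having curvature at most $\kappa = 1/R$, lies between two spheres of radius $R$ tangent to it at $\bm x_t$ — this is exactly the geometric content of the definition of $\kappa$ via the largest inscribed ball. So $\mathcal B$ is sandwiched in a "lens" region, and the hyperplane through $\bm x_t$ with normal $\hat{\bm w}_{N_t}$ is a good approximation of $\mathcal B$ in a neighborhood whose size is governed by $R$.

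Next I would quantify the one-step contraction. Moving from $\bm x$ along $\bm v_t = \hat{\bm w}_{N_t}$ and line-searching to the boundary gives $\bm x_{t+1}$. I would bound $\hat r_{t+1} - r$ in terms of $\hat r_t - r$ by a purely two-dimensional argument in the plane spanned by $\bm x_t - \bm x$ and $\bm x^\star - \bm x$ (or, more carefully, the plane containing $\bm x$, $\bm x_t$, and the relevant sphere center). Using the tangent-sphere sandwich, the angle between $\hat{\bm w}_{N_t}$ and the true minimal direction, and hence the excess distance $\hat r_{t+1} - r$, is controlled by the chord length $\|\bm x_t - \bm x^\star\|_2$ and the radius $R$; standard circle geometry yields something of the form $\hat r_{t+1} - r \le C(\kappa, r)\,(\hat r_t - r)$ with $C(\kappa, r) < 1$ precisely when $\kappa r < 1$. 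Passing to the limit, the ratio $(\hat r_{t+1}-r)/(\hat r_t - r)$ converges to a constant $\lambda$ that can be read off from the linearization of this geometric map at its fixed point $\hat r_t = r$ — essentially the derivative of the sphere-to-tangent-plane projection map, which I expect to come out to something like $\lambda = \tfrac{\kappa r}{2 - \kappa r}$ or a comparably explicit expression, in any case $<1$.

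The main obstacle I anticipate is making the "sandwich between two tangent spheres of radius $R$" argument fully rigorous in $d$ dimensions rather than hand-waving it in 2D: one must check that the worst case for $\hat r_{t+1}$ genuinely is attained in the plane I reduce to, i.e., that replacing the true boundary by the extremal sphere is valid both for locating $\bm x_t$ (where the normal is measured) and for the line search that produces $\bm x_{t+1}$. This requires a monotonicity/convexity observation: among all boundaries with curvature $\le \kappa$ passing through a given point with a given normal, the sphere is extremal for the quantity we are bounding. A secondary, more bookkeeping-level difficulty is verifying that the sequence $\{\hat r_t\}$ is actually decreasing and bounded below by $r$ (so the limit exists and the ratio in \eqref{lim} is well-defined), which I would get from the same one-step bound by an induction showing $r \le \hat r_{t+1} \le \hat r_t$. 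Once the limit is established to exist, identifying $\lambda$ is a routine linearization; the nontrivial part is everything leading up to it.
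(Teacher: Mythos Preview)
Your proposal is essentially the paper's approach: both take $N\to\infty$, replace the boundary by the extremal sphere of radius $R=1/\kappa$, reduce to a planar triangle argument relating $r_{t+1}$ to $r_t$, and then read off the limiting ratio. The paper carries this out explicitly via the laws of sines and cosines (together with a closed-form expression $r_t(\theta_t)$ borrowed from~\cite{fawzi2016robustness}), shows $\theta_t$ decreases geometrically so $r_t\to r$, and finally applies L'H\^opital's rule to obtain $\lambda=\dfrac{r^2(R-r)}{R^2(R+r)}$, which differs from your conjectured $\dfrac{\kappa r}{2-\kappa r}$.
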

\begin{proof}
The proof can be found in Appendix B.
\end{proof}
\vspace{-6mm}
\paragraph{Optimal query distribution}
In practice, however, the number of queries $N$ is limited.
One natural question is  how should one choose the number of queries in each iteration of GeoDA. It can be seen in the experiments that allocating a smaller number of queries for the first iterations and then increasing it  in each iteration can improve the convergence rate of the GeoDA. At early iterations, noisy normal vector estimates are fine because the noise is smaller relative to the potential improvement, whereas in later iterations noise has a bigger impact. This makes the earlier iterations cheaper in terms of queries, potentially speeding up convergence~\cite{devarakonda2017adabatch}. 

We assume a practical setting in which we have a limited budget $N$ for the number of  queries as the target system may block if the number of queries increases beyond a certain threshold \cite{chen2019stateful}.   The goal is to obtain  the optimal distribution of the queries over the iterations. 
\vspace{-2mm}
\begin{theorem}
Given a limited query budget $N$, the  bounds for the  GeoDA $\ell_2$ perturbation error for total number of iterations  $T$ can be obtained as:
\begin{align}\label{boundt}
   \hspace{-2mm} \lambda^{T} (r_{0}-r) - e(\bm N) \le \hat r_{t}-r \le \lambda^{T} (r_{0}-r) +  e(\bm N)
\end{align}
where $   e(\bm N)= \gamma\sum_{i=1}^{T}\frac{\lambda^{T-i} r_{i}}{\sqrt{N_{i}}}$ is the error due to limited number of queries,  $\gamma = \sqrt{\textrm{Tr}{(\bm R})} + \sqrt{2 \lambda_{\max} \log(1/\delta)}$ and $N_t$ is the number of queries to estimate the normal vector to the boundary at point ${\bm{x}}_{t-1}$, and  $r_0 = \|\bm x - {\bm{x}}_0 \| $.
\end{theorem}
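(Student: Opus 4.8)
The plan is to unroll the recursion for $\hat r_t - r$ using the one-step contraction established in Theorem~1 together with the finite-sample estimator bound of Lemma~2, and then sum the accumulated per-iteration errors as a geometric-type series. First I would write down the exact one-step relation in the idealized ($N\to\infty$) case: from the proof of Theorem~1 one has $r_{t+1} - r = \lambda_t (r_t - r)$ for some per-iteration factor $\lambda_t \le \lambda < 1$ governed by the curvature bound $\kappa r < 1$. For the finite-query algorithm, the estimated normal $\hat{\bm w}_{N_t}$ deviates from the true normal, and this perturbs the boundary point reached at step $t$: the displacement of $\hat r_t$ from its idealized value $r_t$ is controlled by $\|\bar{\bm\mu}_{N_t} - \bm\mu\|$ scaled by the step length $r_t$. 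Invoking Lemma~2, with probability at least $1-\delta$ this estimation error is at most $\gamma/\sqrt{N_t}$ with $\gamma = \sqrt{\textrm{Tr}(\bm R)} + \sqrt{2\lambda_{\max}\log(1/\delta)}$, so the perturbation injected into iteration $t$ has magnitude at most $\gamma r_t/\sqrt{N_t}$.

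Next I would set up the perturbed recursion
\begin{equation*}
\hat r_{t+1} - r = \lambda_t(\hat r_t - r) + \epsilon_{t+1}, \qquad |\epsilon_{t+1}| \le \frac{\gamma r_{t+1}}{\sqrt{N_{t+1}}},
\end{equation*}
and unroll it from $t = 0$ to $T$, using $\hat r_0 = r_0 = \|\bm x - {\bm x}_0\|$. Bounding each $\lambda_t$ by $\lambda$ gives
\begin{equation*}
\hat r_T - r = \Big(\textstyle\prod_{t=0}^{T-1}\lambda_t\Big)(r_0 - r) + \sum_{i=1}^{T}\Big(\textstyle\prod_{j=i}^{T-1}\lambda_j\Big)\epsilon_i,
\end{equation*}
and then the product is at most $\lambda^{T}$ and each inner product at most $\lambda^{T-i}$, so that $|\hat r_T - r - \lambda^T(r_0 - r)| \le \gamma\sum_{i=1}^{T}\lambda^{T-i} r_i / \sqrt{N_i} = e(\bm N)$. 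This yields the two-sided bound~\eqref{boundt} directly. To make the $1-\delta$ statement clean across all iterations one would either apply Lemma~2 once per iteration and union-bound (replacing $\delta$ by $\delta/T$, a cosmetic change absorbable into $\gamma$), or state the bound conditionally on the high-probability event of Lemma~2 holding at each step; I would take the latter, lighter route and remark on the union bound.

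The main obstacle I anticipate is making rigorous the claim that the estimator error $\|\bar{\bm\mu}_{N_t}-\bm\mu\|$ translates \emph{linearly} into the error $|\hat r_t - r_t|$ of the realized distance, with the clean multiplicative factor $r_t$ and without curvature-dependent cross terms contaminating the bound. This requires a careful geometric argument: a small angular perturbation of the normal direction $\hat{\bm w}_{N_t}$ moves the intersection of the search ray with the (slightly curved) boundary by an amount that, to first order, scales with the current distance $r_t$ to the boundary and with the sine of the angular error, and the angular error is in turn controlled by $\|\bar{\bm\mu}_{N_t}-\bm\mu\|/\|\bm\mu\|$. One must check that the curvature bound $\kappa r < 1$ keeps the higher-order terms uniformly dominated so they can be folded into constants, and that the line search for $\hat r_t$ does not introduce additional error beyond this. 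Everything else — the unrolling, the geometric summation, the union bound — is routine once this linearization lemma is in hand.
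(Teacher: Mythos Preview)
Your proposal is correct and follows essentially the same route as the paper: establish a one-step perturbed contraction $|\hat r_t - r - \lambda(r_{t-1}-r)| \le \gamma r_t/\sqrt{N_t}$ by combining the linear rate of Theorem~1 with the estimator bound of Lemma~2, then unroll the recursion from $t=0$ to $T$ so the per-step errors accumulate into the geometric sum $e(\bm N)=\gamma\sum_{i=1}^{T}\lambda^{T-i}r_i/\sqrt{N_i}$.

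The only substantive difference is in how the per-step error $|\hat r_t - r_t|\le \gamma r_t/\sqrt{N_t}$ is obtained. You anticipate an angular-perturbation argument (small error in the estimated normal moves the line-search intersection by roughly $r_t$ times the angular error, with curvature cross terms controlled by $\kappa r<1$). The paper instead takes a purely algebraic shortcut: it applies the reverse triangle inequality $\big|\|\hat{\bm w}_{N_t}\|-1\big|\le \|\hat{\bm w}_{N_t}-\bm w_t\|\le \gamma/\sqrt{N_t}$, \emph{defines} $\hat r_t = r_t\|\hat{\bm w}_{N_t}\|$, and multiplies through by $r_t$. This is faster but less faithful to the algorithm (where $\hat r_t$ is actually produced by line search, not by scaling $r_t$), and the paper explicitly works ``in the limit sense,'' treating $r_t-r\approx\lambda(r_{t-1}-r)$ as exact. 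So your geometric justification would be more rigorous, but the paper's argument is deliberately heuristic at this step; you need not labor over the curvature cross terms as much as you suggest. The union-bound remark you make is not addressed in the paper at all.
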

\begin{proof}
The proof can be found in Appendix C.
\end{proof}

%  \begin{algorithm}[t]
	
% 	\DontPrintSemicolon
% 	 \textbf{Inputs:} Original image $\bm x$, query budget $N$, $\lambda$, projection operator $Q$.
	 
% 	 \textbf{Output:} Sparsely perturbed $\bm x_{\textrm{adv}}$, sparsity $s$.
    
%     Obtain ${\bm x}_B$, $\hat {\bm w}_N$ with $N$ queries by Alg.~\ref{alg}.

% 	$m_l = 0$, $m_u = d$, $J =\textrm{round}(\log_2(d))+1$
	
% 	\For{$j = 1:J$}{
	
% 	$k$ = round($\frac{m_u+m_l}{2}$)
	
% 	Obtain  top $k$  absolute values of coordinates of $\hat {\bm w}_{N}$ as $\hat {\bm w}_{\textrm{sp}}$.
	
% 	$\bm x_{j} \gets Q(\bm x  + {\nu}~ \textrm{sign}(\hat {\bm w}_{\textrm{sp}}))$

%     \eIf{$\hat {\bm w}_N^T({\bm x}_j -{\bm x}_B) > 0$}{$m_u = k$}{$m_l = k$}

% 	} 
% 	$\bm x_{\textrm{adv}} \gets  \bm x_j$, $s \gets  m_u$

% 	\caption{{Sparse GeoDA }}
% 	\label{alg2}
% \end{algorithm}

 As in \eqref{boundt}, the error in the convergence is due to two factors: (\textit{i}) curvature of the decision boundary (\textit{ii}) limited number of queries.  If the number of iterations increases, the effect of the curvature can  vanish. However,  the term $\gamma\frac{ r_i }{\sqrt{N_{t}}}$ is not small enough as the number of queries is finite. Having unlimited number of the queries, the error term due to queries can   vanish as well. However, given a limited number of  queries, what should be the distribution of the queries to alleviate such an error? 
  We define the following optimization problem:
  \vspace{-5mm}
\begin{align}\label{eqn:qopt}
\min_{{ N_1, \dots, N_T}}
&\qquad   \sum_{i=1}^{T}\frac{\lambda^{-i}r_i }{\sqrt{N_{i}}}\nonumber  \\
\text{s.t.}
&\qquad \sum_{i=1}^{T} N_i \le N 
\end{align}
where the objective is to minimize the error $e(\bm N)$  while the query budget  constraint is met over  all iterations.
\vspace{-2mm}
\begin{theorem}
 The optimal numbers of  queries for \eqref{eqn:qopt} in each iteration form  geometric sequence with the common ratio  $  \frac{N_{t+1}^*}{N_{t}^*} \approx \lambda^{-\frac{2}{3}}$,
%  \begin{equation}\label{optq1}
%      \frac{N_{t+1}^*}{N_{t}^*} \approx \lambda^{-\frac{2}{3}}
%  \end{equation}
 where $0 \le \lambda \le 1$. Moreover, we have
   \vspace{-2mm}
  \begin{equation}\label{optq}
     N_t^* \approx \frac{ \lambda^{-\frac{2}{3}t}}{\sum_{i=1}^T\lambda^{-\frac{2}{3}i}}
     N.
 \end{equation}
\end{theorem}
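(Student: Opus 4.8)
The plan is to treat \eqref{eqn:qopt} as a continuous convex program, solve it with the KKT conditions, and then invoke the linear convergence of the distance sequence (the convergence guarantee, Theorem~1) to collapse the exact optimal allocation into the claimed geometric form. Note first that the objective of \eqref{eqn:qopt} differs from the query-induced error $e(\bm N)$ of Theorem~2 only by a positive constant factor, so the two have the same minimizer; it therefore suffices to minimize $g(N_1,\dots,N_T) = \sum_{i=1}^{T}\lambda^{-i} r_i N_i^{-1/2}$ over $\{N_i>0:\sum_i N_i\le N\}$.

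First I would record that $g$ is strictly convex on the positive orthant, since each map $N_i\mapsto N_i^{-1/2}$ is convex and the weights $\lambda^{-i}r_i$ are strictly positive; as the feasible set is a polytope, the minimizer exists and is unique. Because $g$ is strictly decreasing in every coordinate, the budget constraint is active at the optimum, i.e. $\sum_i N_i^{*}=N$. Forming the Lagrangian $L=\sum_i\lambda^{-i}r_i N_i^{-1/2}+\nu\bigl(\sum_i N_i-N\bigr)$ and imposing stationarity $\partial L/\partial N_i=-\tfrac12\lambda^{-i}r_i N_i^{-3/2}+\nu=0$ gives $N_i^{*}=(2\nu)^{-2/3}(\lambda^{-i}r_i)^{2/3}$; summing over $i$ and matching the budget fixes $\nu$ and yields the exact solution
\[
N_t^{*}=\frac{(\lambda^{-t}r_t)^{2/3}}{\sum_{i=1}^{T}(\lambda^{-i}r_i)^{2/3}}\,N .
\]
From this, the consecutive ratio is $N_{t+1}^{*}/N_t^{*}=\bigl(\lambda^{-1}\,r_{t+1}/r_t\bigr)^{2/3}$.

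The remaining — and only non-routine — step is to simplify this ratio. By Theorem~1 the distances $\{r_t\}$ converge linearly to the minimal distance $r$, so assuming $r>0$ we have $r_{t+1}/r_t\to 1$, and in fact the linear rate gives $r_{t+1}/r_t = 1+O(\lambda^{t})$. Substituting $r_{t+1}/r_t\approx 1$ produces the common ratio $N_{t+1}^{*}/N_t^{*}\approx\lambda^{-2/3}$, and the same approximation $r_i\approx r$ cancels the $r_i$ factors in the numerator and denominator of the displayed formula, leaving $N_t^{*}\approx \frac{\lambda^{-\frac23 t}}{\sum_{i=1}^{T}\lambda^{-\frac23 i}}\,N$, which is exactly \eqref{optq}. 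I expect the main obstacle to be precisely this last approximation: arguing that the correction to the geometric ratio is $1+O(\lambda^{t})$ and hence negligible for $t$ not too small (which is why the statement is phrased with ``$\approx$''), together with the minor caveat that the continuous optimum must be rounded to integers, a perturbation that only affects the objective to lower order.
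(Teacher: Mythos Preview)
Your proposal is correct and follows essentially the same route as the paper: form the Lagrangian for the convex program, solve the stationarity condition to obtain $N_t^{*}=\frac{(\lambda^{-t}r_t)^{2/3}}{\sum_i(\lambda^{-i}r_i)^{2/3}}\,N$, and then invoke the convergence $r_t\to r$ from Theorem~1 to reduce this to the geometric form. Your justification that the budget constraint binds (via strict monotonicity of $g$) is slightly cleaner than the paper's contradiction argument, and your explicit $1+O(\lambda^t)$ correction and integer-rounding remark are refinements the paper omits, but the core argument is the same.
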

\begin{proof}
 The  proof can be found in Appendix~D.
\end{proof}
 
\vspace{-3mm}

\begin{figure*}[ht!]
\vspace{-5mm}
  \subfloat[\label{l2all}]{%
      \includegraphics[ width=5.4 cm,height=4cm]{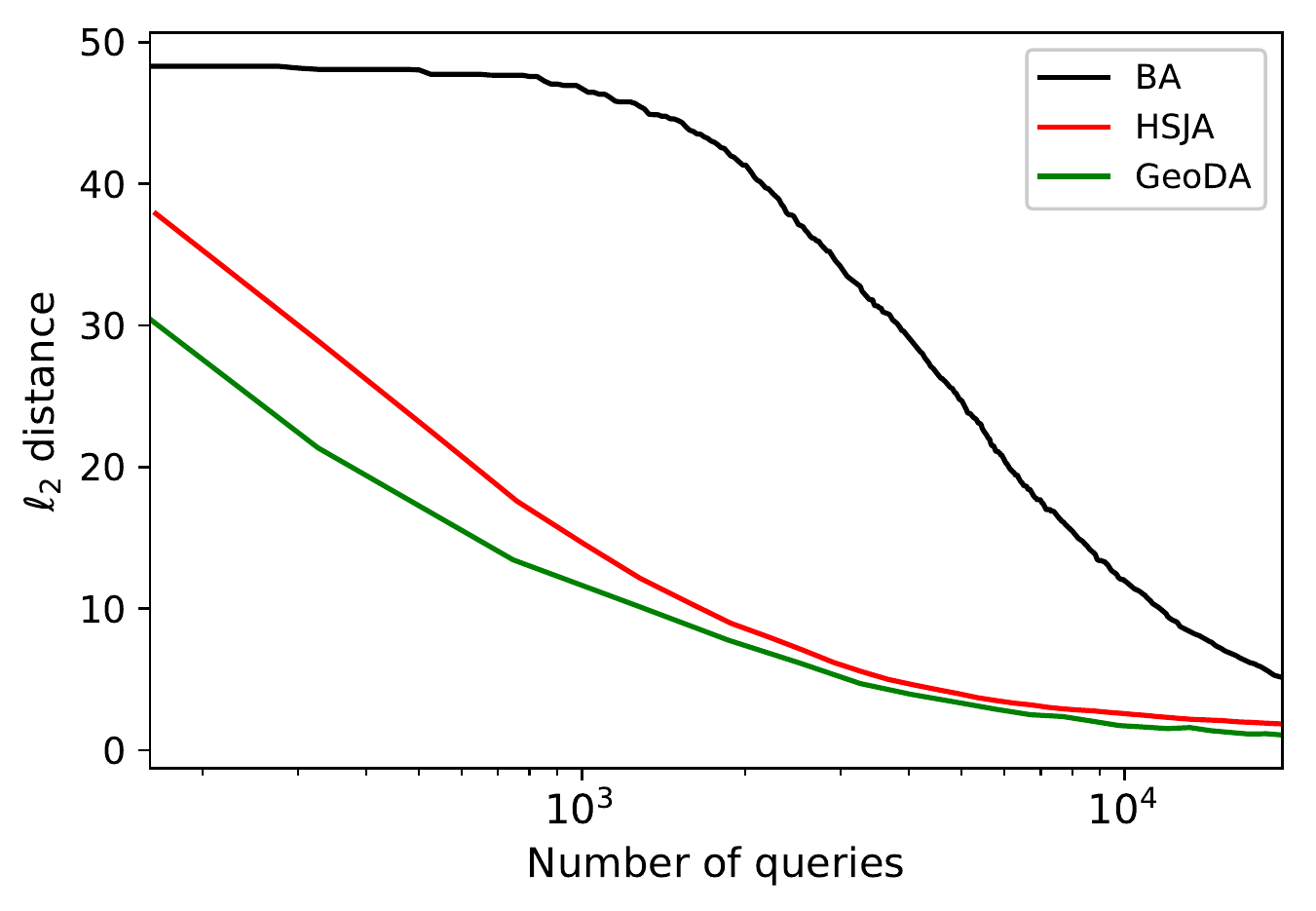}}
\hspace{\fill}
  \subfloat[\label{iterquery}   ]{%
      \includegraphics[ width=5.4 cm,height=4cm]{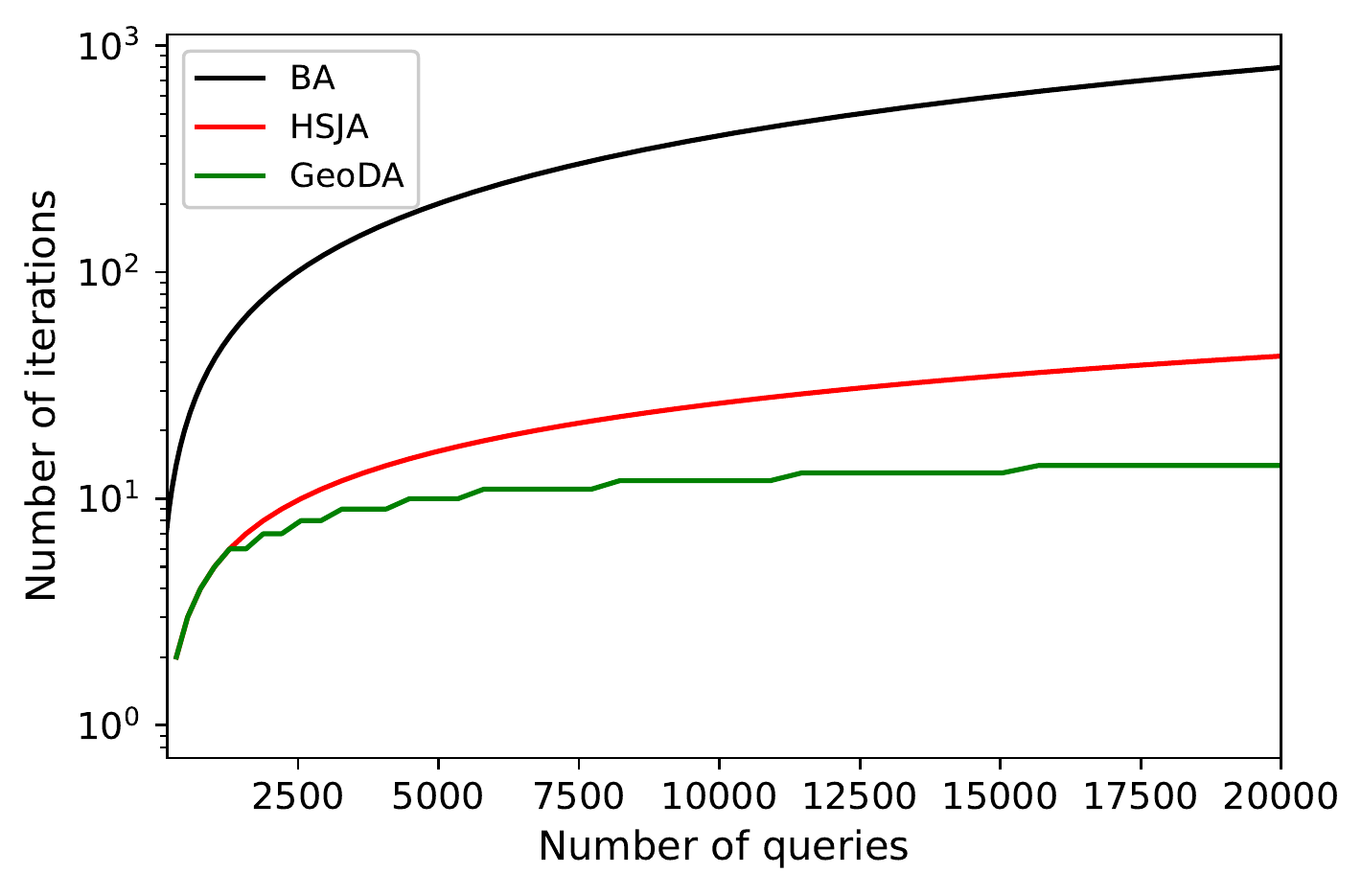}}
\hspace{\fill}
  \subfloat[\label{sparsesub}]{%
      \includegraphics[ width=5.4 cm,height=4cm]{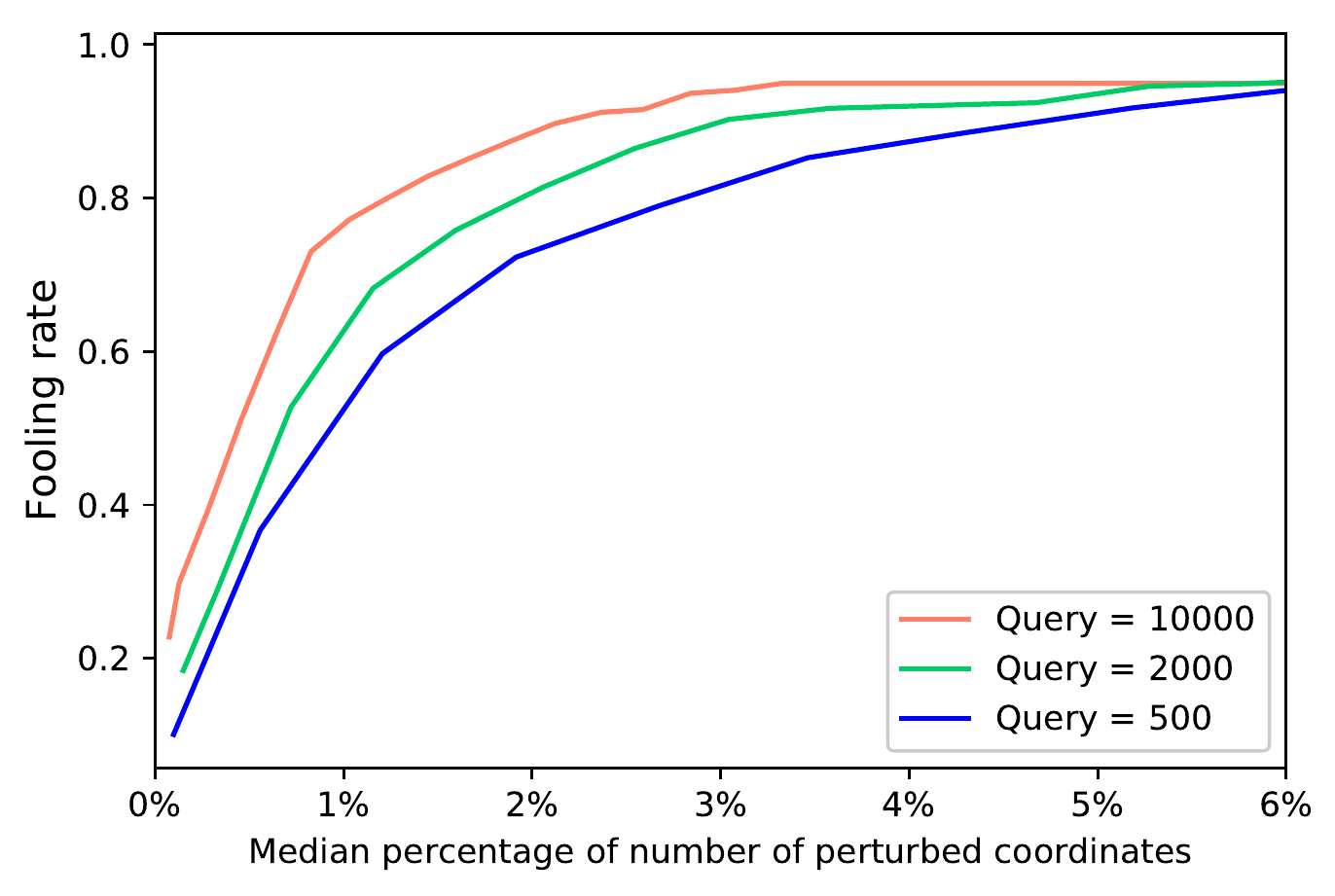}}\\
        \vspace{-3mm}
\caption{Performance evaluation of GeoDA for $\ell_p$ when $p=1, 2$ (a) Comparison for the performance of GeoDA, BA, and HSJA for $\ell_2$ norm. (b) Comparison for the number of required iterations in GeoDA, BA, and HSJA.  (c) Fooling rate vs. sparsity for different numbers of queries in sparse GeoDA. }
\end{figure*}

\subsection{$\ell_1$ perturbation (sparse case)}
The framework proposed by GeoDA is general enough to find sparse adversarial perturbations in the black-box setting as well. The sparse adversarial perturbations can be computed using the following optimization problem with box constraints as:
\begin{align}\label{optimiz}
 &    \hspace{8mm}  \min_{\bm v} 
 & &  \hspace{- 10mm} \|\bm v\|_1 \nonumber \\
 & \hspace{9mm} {\text{ s.t.}}
& & \hspace{-10mm} \bm w^T(\bm x + \bm v) - \bm w^T \bm x_B =0 \nonumber \\
& 
& & \hspace{-10mm} \bm l 	\preceq \bm x + \bm v 	\preceq \bm u
\end{align}
 In the box constraint $\bm l 	\preceq \bm x + \bm v 	\preceq \bm u$,   $\bm l$ and $\bm u$ denote the lower and upper bounds of the values of $\bm x + \bm v$.
 We can estimate  the normal  vector $\hat {\bm w}_N$ and the boundary point ${\bm x}_B$ similarly to the $\ell_2$ case with $N$ queries. Now, the decision boundary $\mathcal B $ is approximated with the hyperplane $  \{\bm x : \hat {\bm w}_N^T(\bm x-{\bm x}_B) = 0\}$. The goal is to find the top-$k$ coordinates of the normal vector $\hat {\bm w}_N$ with minimum $k$ and pushing them to extreme values of the valid range depending on the sign of the coordinate until it hits the approximated hyperplane. In order to find the minimum $k$, we deploy binary search for a $d$-dimensional image. Here, we just consider one iteration for the sparse attack., while the initial point of the sparse case is obtained using the GeoDA for $\ell_2$ case. The detailed Algorithm for the sparse version of GeoDA is given in Algorithm~\ref{alg2}.

   \begin{table*} % Add the following just after the closing bracket on this line to specify a position for the table on the page: [h], [t], [b] or [p] - these mean: here, top, bottom and on a separate page, respectively
\centering % Centres the table on the page, comment out to left-justify
\begin{tabular}{c | c |c  c c c  } % The final bracket specifies the number of 
\toprule % Top horizontal line

 & {{Queries}} & ~~  ~~$\ell_2$~~ ~~ &  ~~ ~~$\ell_\infty$~~ &~~~~ \normalsize{Iterations}~~ & ~~Gradients~~    \\

\midrule  \midrule % In-table horizontal line

& \small{1000}    & \small{47.92} &   \small{0.297}  & \small{40} & -\\
 Boundary attack~\cite{brendel2017decision}  & \small{5000}  & \small{24.67}  & \small{0.185} & \small{200}& - \\
    & \small{20000}  & \small{5.13}  & \small{0.052} &   \small{800} & - \\

\midrule

&  \small{1000}   & \small{16.05}  & - &   \small{3}& -  \\
 qFool~\cite{chen2019hopskipjumpattack}  & \small{5000}   & \small{7.52} &   - &  \small{3} & -\\
  & \small{20000}  & \small{1.12}  & - \  & \small{3}  & -\\
\midrule

&  \small{1000}   & \small{14.56}  & \small{0.062} &   \small{6}& -  \\
 HopSkipJump attack~\cite{chen2019hopskipjumpattack}  & \small{5000}   & \small{4.01} &   \small{0.031} &  \small{17} & -\\
  & \small{20000}  & \small{1.85}  & \small{0.012} \  & \small{42}  & -\\
\midrule

&  \small{1000}     & \small{{11.76}} &   \small{0.053}& \small{6}   & - \\
{GeoDA-fullspace}   & \small{5000}   & \small{{3.35}}  &   \small{0.022}  & \small{10}  & -\\
  & \small{20000}   & \small{{1.06}} &   \small{0.009}& \small{14}    & -\\
\midrule

&  \small{1000}   & \small{8.16}  & \small{0.022} & \small{6} & - \\
 {GeoDA-subspace}  & \small{5000}   & \small{2.51}  & \small{0.008} &    \small{10}  & -   \\
  & \small{20000}  & \small{1.01}  &     \small{0.003}    & \small{14} & -\\
  \midrule % In-table horizontal line
%  PGD (white-box) \cite{madry2017towards} &  - & -  &   &   &  \\

%   \midrule % In-table horizontal line
 DeepFool (white-box) \cite{moosavi2016deepfool} &  - & \small{0.026}  &  -  &  \small{2}   &  \small{20}   \\
   \midrule
  C\&W (white-box) \cite{carlini2017towards} &  - & \small{0.034}  & -  &  \small{10000}  &  \small{10000}   \\
% \midrule % In-table horizontal line
% Average Rate & 0.920 & 0.882 & 0.477 & 0.539 & 0.923\\ % Summary/total row
% \bottomrule % Bottom horizontal line
\bottomrule
\end{tabular}

\caption{The performance comparison of GeoDA with BA and HSJA for median $\ell_2$ and $\ell_\infty$ on ImageNet dataset.} % Table caption, can be commented out if no caption is required
\label{l2linf} % A label for referencing this table elsewhere, references are used in text as \ref{label}
\end{table*}

\section{Experiments}
  \subsection{Settings}
 We evaluate our algorithms on a pre-trained ResNet-50~\cite{he2016deep} with a set $\mathcal X$ of 350 correctly classified and randomly selected images from the ILSVRC2012's validation set~\cite{deng2009imagenet}. All the images are resized to $224 \times 224 \times 3$.
 %In our experiments, the dimension $d$ is of the  image is resized to $224 \times 224 \times 3$.

 To evaluate the performance of the attack we deploy the median of the $\ell_p$ norm for $p = 2, \infty$ distance over all tested samples, defined by
$  \underset{{{{\bm x} \in \mathcal{X}} }}{\textrm{median}}\Big( \| \bm x - \bm x^{\textrm{adv}}\|_p \Big)$. For sparse perturbations, we measure the performance by fooling rate defined as $|\bm x \in \mathcal{X}: \hat k(\bm x) \ne \hat k(\bm x^{\textrm{adv}}) |/|\mathcal{X}|$. In evaluation of the sparse GeoDA, we define \textit{sparsity} as the percentage of the perturbed coordinates of the given image

 \begin{algorithm}[t]
	
	\DontPrintSemicolon
	 \textbf{Inputs:} Original image $\bm x$, query budget $N$, $\lambda$, projection operator $Q$.
	 
	 \textbf{Output:} Sparsely perturbed $\bm x_{\textrm{adv}}$, sparsity $s$.
    
    Obtain ${\bm x}_B$, $\hat {\bm w}_N$ with $N$ queries by $\ell_2$ GeoDA algorithm.

	$m_l = 0$, $m_u = d$, $J =\textrm{round}(\log_2(d))+1$
	
	\For{$j = 1:J$}{
	
	$k$ = round($\frac{m_u+m_l}{2}$)
	
	Obtain  top $k$  absolute values of coordinates of $\hat {\bm w}_{N}$ as $\hat {\bm w}_{\textrm{sp}}$.
	
	$\bm x_{j} \gets Q(\bm x  + {\nu}~ \textrm{sign}(\hat {\bm w}_{\textrm{sp}}))$

    \eIf{$\hat {\bm w}_N^T({\bm x}_j -{\bm x}_B) > 0$}{$m_u = k$}{$m_l = k$}

	} 
	$\bm x_{\textrm{adv}} \gets  \bm x_j$, $s \gets  m_u$

	\caption{{Sparse GeoDA }}
	\label{alg2}
\end{algorithm}

\begin{table} [h]
\centering % Centres the table on the page, comment out to left-justify
\begin{tabular}{c | c |c c c c  } % The final bracket specifies the number of 
\toprule % Top horizontal line

 & \small{\textbf{Queries}} & \small{\textbf{ Fooling rate}} & \small{\textbf{Perturbation} }    \\

\midrule  \midrule

  & \small{500 } & \small{88.44 $\%$}  & \small{4.29 $\%$}  \\
{{ GeoDA}} & \small{2000 }  &   \small{90.25 $\%$}& \small{3.04 $\%$}   \\
  & \small{10000 } & \small{91.17 $\%$}  & \small{2.36 $\%$}  \\% Content row 1

  \midrule % In-table horizontal line
SparseFool~\cite{brendel2017decision} & -  &   \small{100 $\%$} & \small{0.23 $\%$}  \\% Content row 1

\bottomrule
\end{tabular}

\caption{The performance comparison of black-box sparse GeoDA for median sparsity compared to white box attack SparseFool~\cite{brendel2017decision}  on ImageNet dataset.} 
\label{sparse} 
\end{table}

%. 
%   \begin{equation}\label{optq}
%      N_1^* \approx \frac{ \lambda^{-\frac{2}{3}}}{\sum_{i=1}^T\lambda^{-\frac{2}{3}i}}
%      N
%  \end{equation}

% \begin{equation}

\subsection{Performance analysis}

\paragraph{Black-box attacks for $\ell_p$ norms.} We compare the performance of the GeoDA with state of the art attacks for $\ell_p$ norms. There are several attacks in the literature including  Boundary attack \cite{brendel2017decision}, HopSkipJump attack \cite{chen2019hopskipjumpattack}, qFool \cite{liu2019geometry}, and OPT attack \cite{cheng2018query}. In our experiments, we compare GeoDA with Boundary attack, qFool and HopSkipJump attack. We do not compare our algorithm with OPT attack as HopSkipJump already outperforms it considerably~\cite{chen2019hopskipjumpattack}.  In our algorithm, the optimal distribution of the queries is obtained for any given number of queries for $\ell_2$ case.   
The results for $\ell_2$ and $\ell_\infty$ for different numbers of queries is depicted in Table~\ref{l2linf}.  GeoDA can outperform the-state-of-the-art both in terms of smaller perturbations and number of iterations, which has the benefit of parallelization. In particular, the images can be fed into multiple GPUs with larger batch size.
In Fig.~\ref{l2all}, the $\ell_2$ norm of GeoDA, Boundary attack and HopSkipJump are compared. As shown, GeoDA can outperform the HopSkipJump attack especially when the number of queries is small. By increasing the number of queries, the performance of GeoDA and HopSkipJump are getting closer.  
In Fig.~\ref{iterquery}, the number of iterations versus the number of queries for different algorithms are compared. As depicted, GeoDA needs fewer iterations compared to HopSkipJump and BA when the number of queries increases. Thus, on the one hand GeoDA generates smaller $\ell_2$  perturbations compared to the HopSkipJump attack when the  number of queries is small, on the other hand, it saves significant computation time due to parallelization of queries fed into the GPU.

 \begin{figure*}
\centering
  \includegraphics[width=14.2cm,height=5cm]{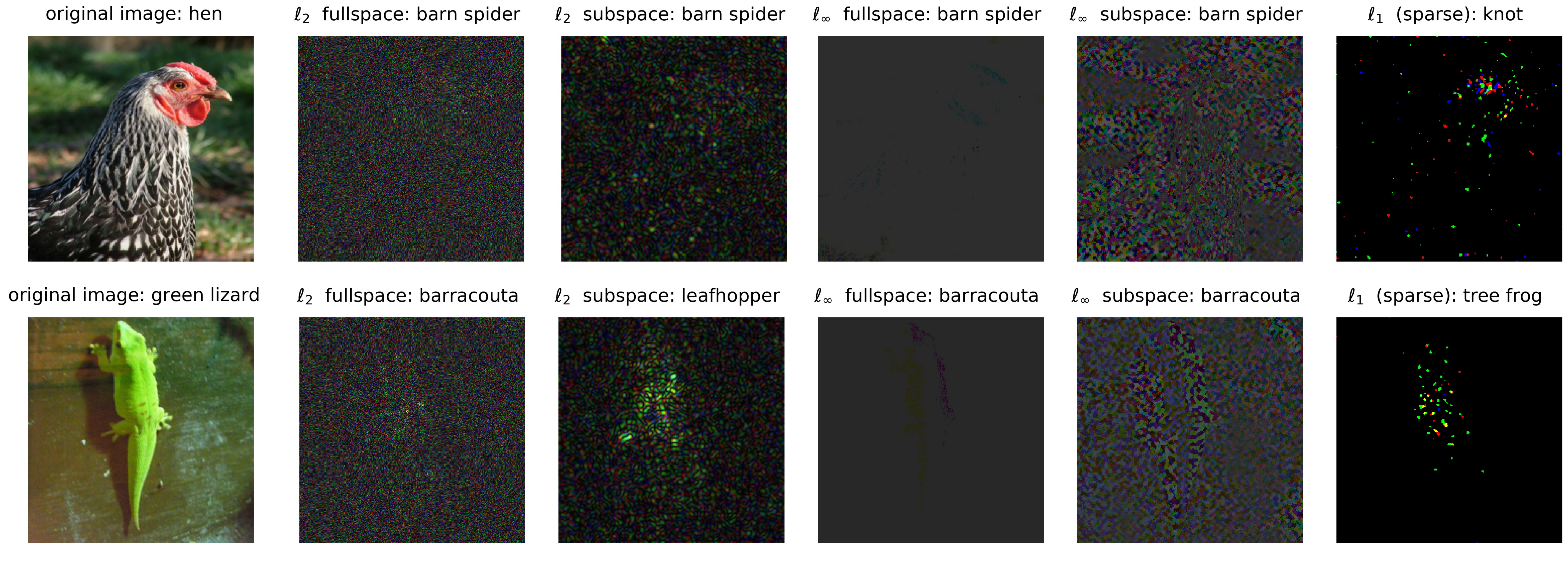}
  \label{pertss}
  \vspace{-3mm}
    \caption{Original images and adversarial perturbations generated by GeoDA for $\ell_2$ fullspace, $\ell_2$ subspace, $\ell_\infty$ fullspace, $\ell_\infty$ subspace, and $\ell_1$ sparse with $N=10000$ queries. (Perturbations are magnified $\sim10\times$ for better visibility.)}
    \label{pertss}
\end{figure*}

\begin{figure*}[ht]
  \subfloat[\label{fig2a}]{%
      \includegraphics[ width=5.2 cm,height=3.9cm]{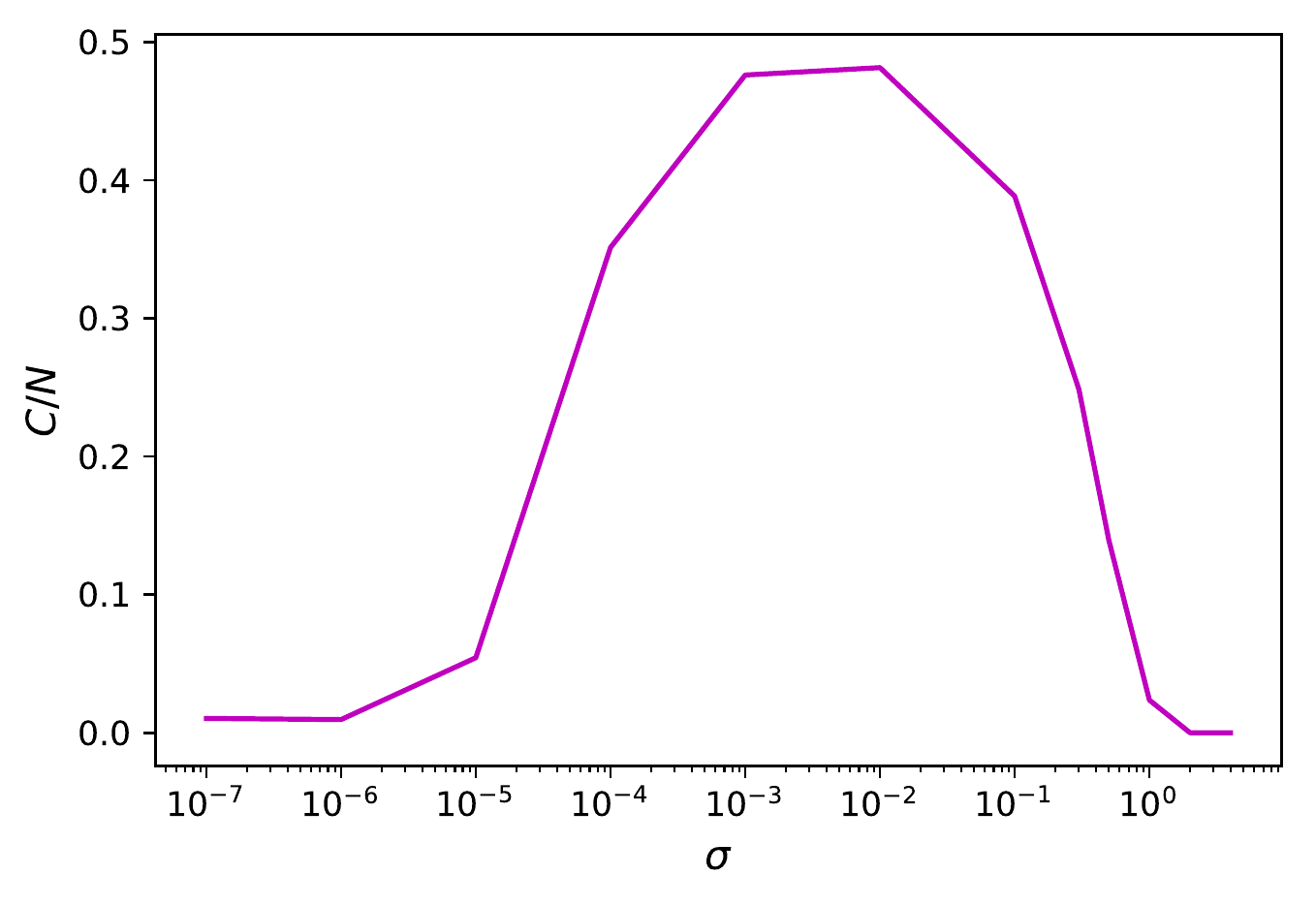}}
\hspace{\fill}
  \subfloat[\label{fig2b}   ]{%
      \includegraphics[ width=5.2 cm,height=3.9cm]{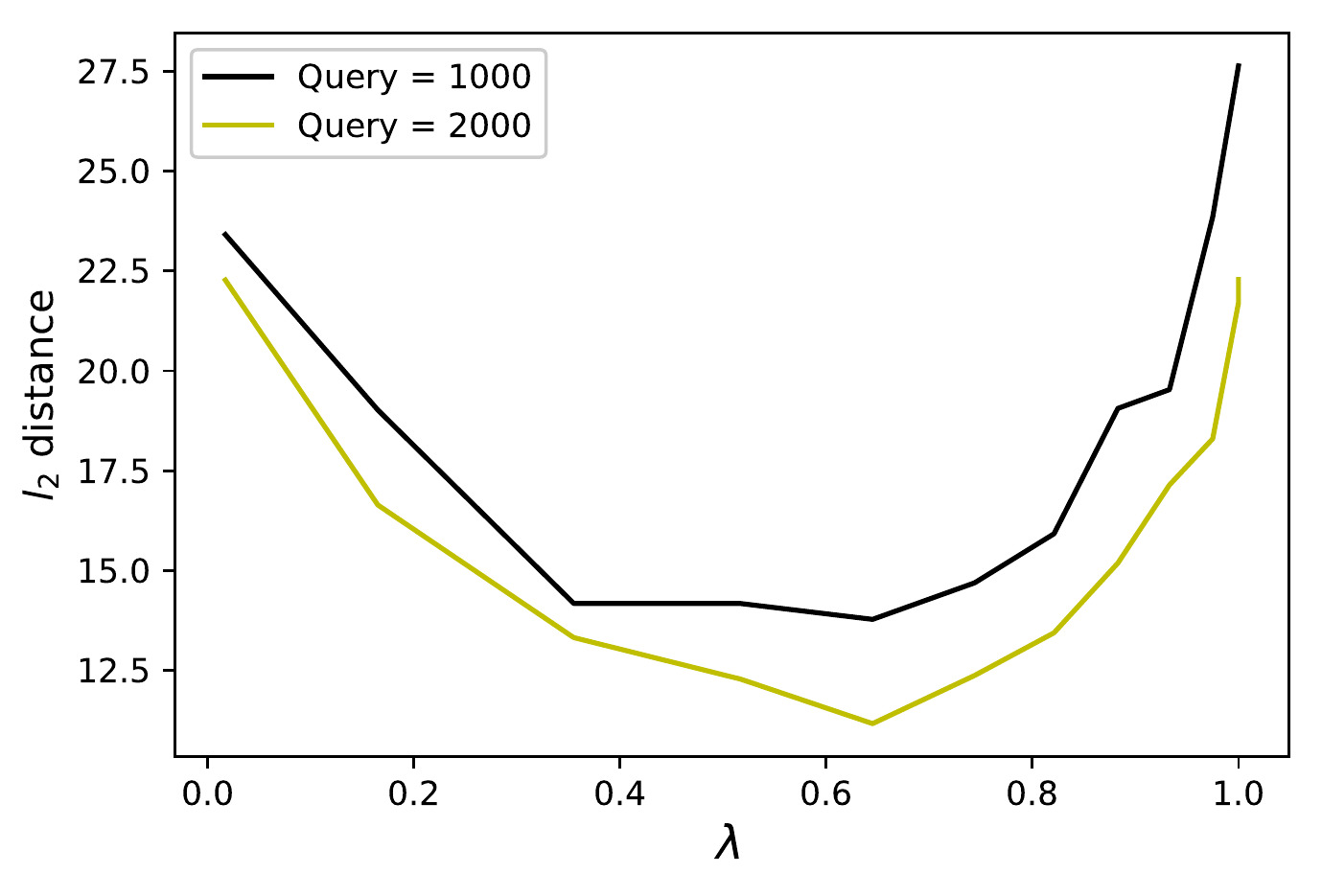}}
\hspace{\fill}
  \subfloat[\label{fig2c}]{%
      \includegraphics[ width=5.2 cm,height=3.9cm]{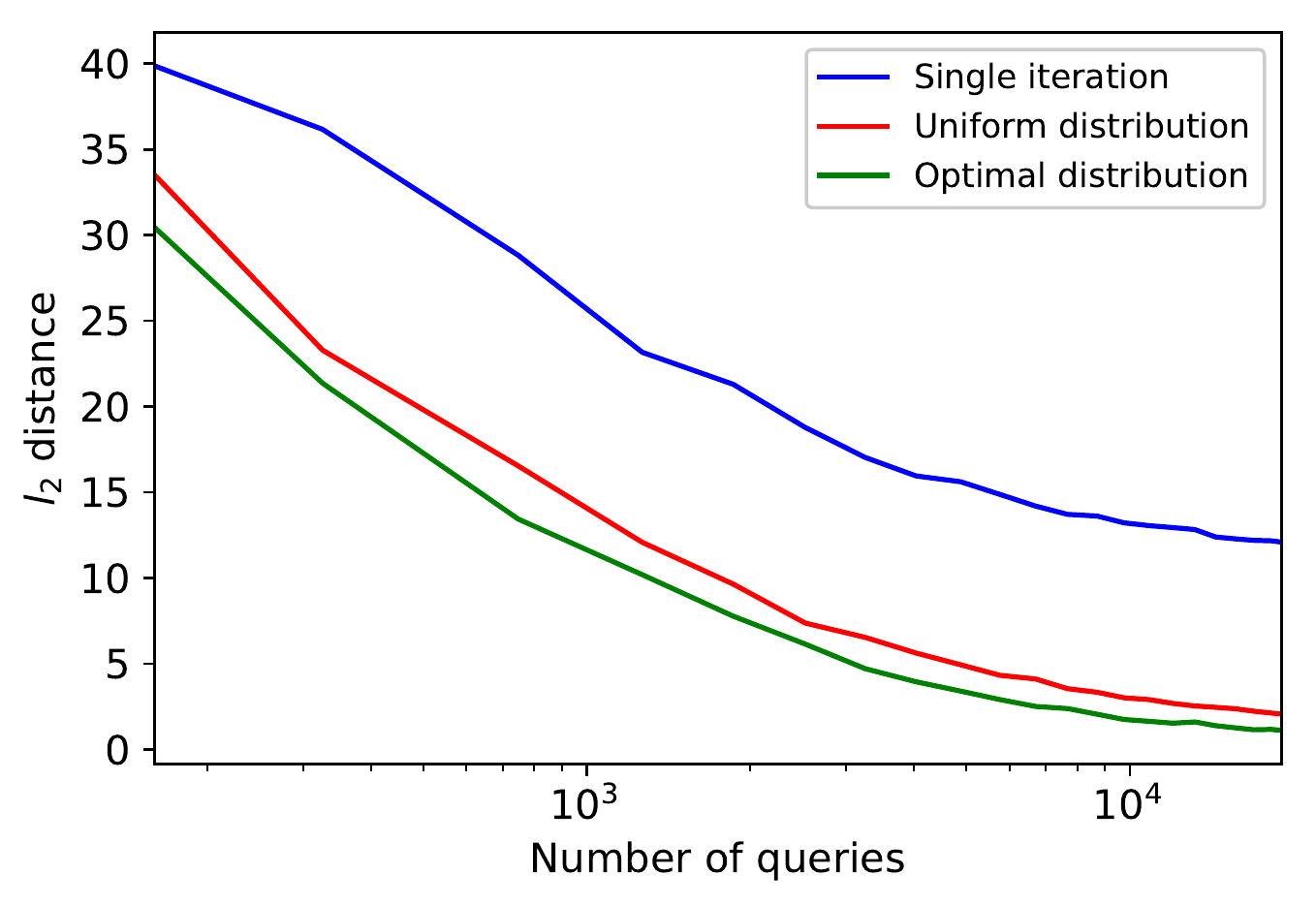}}\\
        \vspace{-3mm}
\caption{(a) The effect of the variance $\sigma$ on the ratio of correctly classified queries $C$ to the total number of queries $N$ at boundary point ${\bm x}_B$. (b) Effect of $\lambda$ on the performance of the algorithm.  (c) Comparison of two extreme cases of query distributions, i.e., single iteration ($\lambda \to 0$) and uniform distribution ($\lambda =1$) with optimal distribution ($\lambda = 0.6$). }
\end{figure*}

% \red{maybe a better }

Now, we evaluate the performance of  GeoDA for generating sparse perturbations. In Fig.~\ref{sparsesub},  the fooling rate versus sparsity  is depicted. In experiments, we observed that instead of using the boundary point ${\bm x}_B$ in the sparse GeoDA, the performance of the algorithm can be improved by further moving towards the other side of the hyperplane boundary. Thus, we use ${\bm x}_B + \zeta({\bm x}_B - {\bm x} )$, where $\zeta \ge 0$. The parameter $\zeta$ can adjust the trade-off between the fooling rate and the sparsity. It is observed that the higher the value for $\zeta$,  the higher the fooling rate and the sparsity and vice versa. In other words, choosing small values for $\zeta$ produces sparser adversarial examples; however, it decreases the chance that it is an adversarial example for the actual boundary. In Fig.~\ref{sparsesub}, we depicted the trade-off between fooling rate and sparsity by increasing the value for $\zeta$ for different query budgets. 
The larger the number of queries, the closer the initial point to  the original image, and also the better our algorithm performs in generating sparse adversarial examples. In Table~\ref{sparse}, the sparse GeoDA is compared with the white-box attack SparseFool. We show that with a limited number of queries, GeoDA can generate sparse perturbations with acceptable fooling rate with sparsity of about $3 $ percent with respect to the white-box attack SparseFool.
The adversarial perturbations generated by GeoDA for  $\ell_p$ norms  are shown in Fig.~\ref{pertss} and the effect of different norms can be observed.

\vspace{-3mm}
\paragraph{Incorporating prior information.} Here, we evaluate the methods proposed in Section~\ref{sect:estimator} to incorporate prior information in order to improve the estimation of the normal vector to the decision boundary. As  sub-space priors, we deploy the DCT basis functions in which  $m$ low frequency subspace directions are chosen \cite{moosavi2019geometry22}. As shown in Fig. \ref{subtrans}, biasing the search space to the DCT sub-space can reduce the $\ell_2$ norm of the perturbations by approximately $ 27 \%$  compared to the full-space case. 
For transferrability, we obtain the normal vector of the given image using the white box attack DeepFool~\cite{moosavi2016deepfool} on a ResNet-34 classifier. We bias the search space for normal vector estimation as described in Section~\ref{sect:estimator}. As it can be seen in Fig.~\ref{subtrans}, prior information can improve the normal vector estimation significantly. %, leading to smaller $\ell_2$ perturbations. 

 \subsection{Effect of hyper-parameters on the performance}
 Instead of throwing out the gradient obtained from the previous iterations, we can take advantage of them in next iterations as well. To do this, we can bias the covariance matrix $\bm \Sigma$ towards the gradient obtained from the previous iteration. The other way is to simply have a weighted average of the estimated gradient and previous gradients. As a general rule, $\beta$ given in \eqref{eq63} should be chosen in such a way that the estimated gradient in recent iterations  get more weights compared to the first iterations.
 
    \begin{figure}[ht]
	\centering
	\hspace*{-0.0in}
	\includegraphics[width=0.32\textwidth]{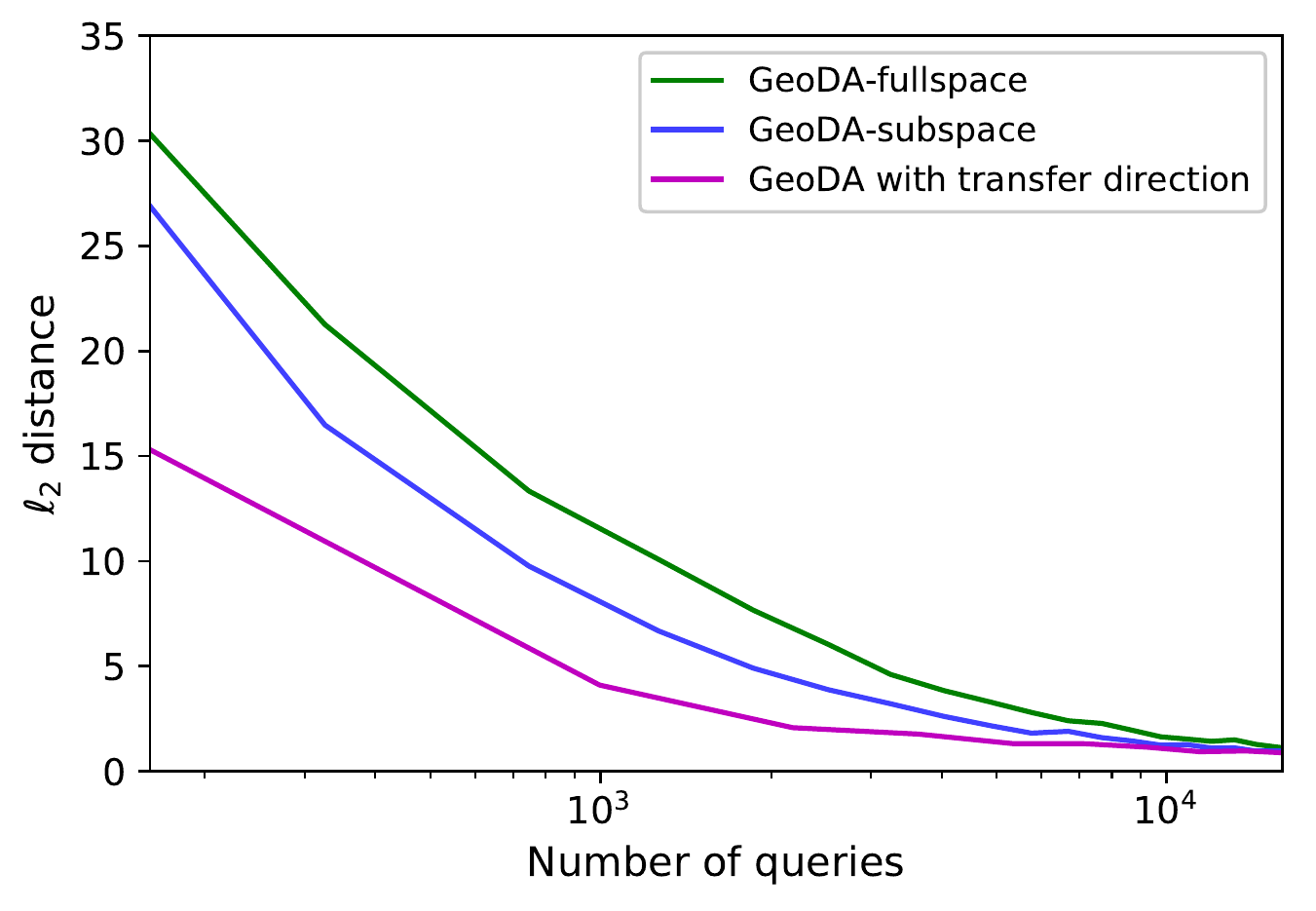}
	\label{fig:11}
	  \vspace{-3mm}
	\caption{Effect of prior information, i.e., DCT sub-space and transferability on the performance of $\ell_2$ perturbation.}
	\label{subtrans}
\end{figure}
 In practice, we need to choose $\sigma$ such that the locally flat assumption of the boundary is preserved. Upon generating the  queries at boundary point $\bm x_B$ to estimate  the direction of the normal vector as in~\eqref{normal}, the value for $\sigma$ is chosen in such a way that the number of correctly classified images and adversarial images on the boundary are almost the same. In Fig.~\ref{fig2a}, the effect of variance $\sigma$ of added Gaussian perturbation on the number of correctly classified queries on the boundary point is illustrated. We obtained a random point $\bm x_B $ on the decision boundary of the image classifier and query the image classifier 1000 times.  As it can be seen, the variance $\sigma$ is too small, none of the queries is correctly classified as the point $\bm x_B$ is not exactly on the boundary. It is worth mentioning that in binary search we choose the point on the adversarial side as a boundary point. On the other hand if the variance is too high,  all the images are classified as adversarial since they are highly perturbed.

In order to obtain the optimal query distribution for a given limited budget $N$, the values for $\lambda$ and $T$ should be given. Having fixed $\lambda$, if $T$ is large, the number of queries allocated to the first iteration may be too small. To address this, we consider a fixed number of queries for the first iteration as  $N^*_1 = 70$. Thus, having fixed  $\lambda$, a reasonable choice for $T$ can be obtained by solving \eqref{optq} for $T$. Based on \eqref{optq}, if $\lambda \to 0$, all the queries are allocated to the last iteration and when $\lambda = 1$, the query distribution is uniform. A value between these two extremes is desirable for our algorithm. To obtain this value, we run our algorithm for different $\lambda$ for only 10 images different from $\mathcal{X}$. As it can be seen in Fig. \ref{fig2b}, the algorithm has its worst performance when $\lambda$ is close to the two extreme cases: single iteration ($\lambda \to 0$) and uniform distribution ($\lambda = 1$). We thus choose the value $\lambda=0.6$ for our experiments. Finally, in Fig.~\ref{fig2c}, the comparison between three different query distributions is shown. The  optimal query distribution achieves the best performance while the single iteration preforms worst. Actually, the fact that the single iteration performs worst is reflected in our proposed bound in \eqref{boundt} as even with infinite number of queries it can not do better than $\lambda(r_0 - r)$. Indeed the effect of curvature can be addressed only by increasing the number of iterations.

\section{Conclusion}
In this work, we propose a new geometric framework for designing query-efficient decision-based black-box attacks, in which the attacker only has access to the top-1 label of the classifier. Our method relies on the key observation that the curvature of the decision boundary of deep networks is small in the vicinity of data samples. This permits to estimate the normals to the decision boundary with a small number of queries to the classifier, hence to eventually design query-efficient $\ell_p$-norm attacks. In the particular case of $\ell_2$-norm attacks, we show theoretically that our algorithm converges to the minimal adversarial perturbations, and that the number of queries at each step of the iterative search can be optimized mathematically. We finally study  GeoDA through extensive experiments that confirm its superior performance compared to state-of-the-art black-box attacks. 

\subsection*{Acknowledgements}
This work was supported in part by the US National Science Foundation under grants ECCS-1444009 and CNS-1824518. S. M. is supported by a Google Postdoctoral Fellowship.
% Our framework illustrates the potential of geometry-driven algorithms for implementing adversarial attacks that seriously challenge state-of-the-art deep learning systems. 

%Furthermore, we show two examples of geometric priors that can significantly decrease the number of required queries.
%based  We mathematically showed that the proposed framework converges to minimal perturbation in $\ell_2$ case. 
%We also theoretically derived an optimal distribution over iterations which can outperform the uniformly distributed queries in the query-limited regime.
% \clearpage

{\small
\bibliographystyle{ieee_fullname}
\bibliography{egbib}
}

 \newpage
 
 \section{Appendix}
 \subsection*{A. Proof of Lemma 2}
 \begin{proof}
 Let $X_i$ be a random vector taking values in $\mathbb{R}^d$ with mean $\bm \mu=\mathbb{E}[X]$ and covariance matrix $\bm R=\mathbb{E}(X-\bm \mu)(X-\bm \mu)^T$. Given the $X_1, \dots, X_n$, the goal is to estimate $\bm \mu$. 
If $X$ has a multivariate Gaussian or sub-Gaussian distribution, the sample mean $  \bm{\bar \mu}_N=\frac{1}{N}\sum_{i=1}^{N}X_i $ is the result of MLE estimation, which satisfies, with probability at least $1-\delta$ 
\begin{equation}\label{dava21}
||\bm{\bar \mu}_N - \bm{ \mu}||\le \sqrt{\frac{\textrm{Tr}(\bm R)}{N}}+\sqrt{\frac{2\lambda_{\textrm{max}}\log(1/\delta)}{N}}
\end{equation}
where $\textrm{Tr}(\bm R)$ and $\lambda_{\textrm{max}}$ denote the trace and largest eigenvalue of the covariance
matrix $\bm R$, respectively \cite{hanson1971bound}. We already know  the truncated normal distribution mean and variance. Although, the truncated distribution is similar to Gaussian, we need to prove that it satisfies the sub-Gaussian distribution property so that we can use the bound in \eqref{dava21}.

The truncated distribution with mean $\bm \mu$ and covariance matrix $\bm R$ is a sub-Gaussian distribution.
A given distribution is sub-Gaussian if for all unit vectors $\{\bm v \in \mathbb{R}^d: ||\bm v||=1\}$
\cite{lugosi2019sub}, the following condition holds
\begin{equation}
    \mathbb{E}~ [\textrm{exp}(\lambda \langle \bm v, X-\bm \mu\rangle) ]\le \textrm{exp}(c\lambda^2\langle \bm v, \bm \Sigma \bm v \rangle).
\end{equation}
Assuming the hyperplane $\bm w^T X \ge 0$ truncated Normal distribution with mean zero and covariance matrix $\bm \Sigma$, the left hand side of the (13) can be computed as:
\begin{align}
    \mathbb{E}~ [\textrm{exp}(\lambda \langle \bm v, X-\bm \mu\rangle) ]=\int_{\mathcal{H}^+} \textrm {exp}(\lambda \bm v^T X) \phi_d({X}|\bm\Sigma) dX 
\end{align}
where $\mathcal{H}^+=\{X \in \mathbb{R}^d: \bm w^T X \ge 0 \}$. Since $\bm R$ is a symmetric, positive definite matrix, using Cholesky decomposition we can have $\bm R^{-1}=\bm \Psi^T \bm \Psi$ where $\bm \Psi$ is a non-singular, upper triangular matrix \cite{higham1990analysis}. By transforming the variables, we have $Y= \bm \Psi X$. Using $Y$, with some manipulation as in \cite{tallis1965plane}, one can get
\begin{equation}
        \mathbb{E}~ [\textrm{exp}(\lambda \langle \bm v, X-\bm \mu\rangle) ]= \textrm{exp} \left(\frac{1}{2}\lambda^2 \bm v^T \bm \Sigma \bm v \right)\Phi \left[\frac{\lambda \bm w^T \bm \Sigma \bm v}{\sigma}\right]
\end{equation}
and $\sigma^2= \bm w^T \bm \Sigma \bm w$, and $\Phi[.]$ is the cumulative
distribution function of the univariate normal distribution. Plugging $\bm \Sigma =\mathcal{I}$, one can get
\begin{align}
    \mathbb{E}~ [\textrm{exp}(\lambda \langle \bm v, X\rangle) ]& =  \textrm{exp} \left(\frac{1}{2}\lambda^2  \right)\Phi \left[{\lambda \bm w^T  \bm v}\right] \nonumber
    \\&\le\textrm{exp} \left(\frac{1}{2}\lambda^2  \right),
\end{align}
where the inequality is valid due to the fact that the CDF function is equal to 1 in the maximum. Comparing with the right hand side of the (13):
\begin{align}
    \textrm{exp} \left(\frac{1}{2}\lambda^2  \right) \le \textrm{exp} \left(\frac{1}{2}c\lambda^2  \right),
\end{align}
one can see that it is valid for any $c\ge 1$. Thus, the truncated Normal distribution is a sub-Gaussian distribution.
\end{proof}
The above proof is    consistent with our intuition as the truncated Gaussian has the tails approaching zero at least as fast as exponential distribution. The truncated part of the Gaussian is already equal to zero so there is no chance for being a heavy tailed distribution.
 Thus, the bound provided in \eqref{dava21} can be valid for our problem \cite{lugosi2019sub}.

Since the covariance matrix $\bm R$ is unknown, we need to find bounds for $\textrm{Tr}(\bm R)$ and $\lambda_{\textrm{max}}$ as well. It can easily be obtained that 
\begin{equation}
    \textrm{Tr}(\bm R)= d+c_2 \bm w^T \bm w = d+c_2
\end{equation}
In order to obtain the maximum eigenvalue of the $\bm R$, we use Weyl's inequality to have an upper bound for largest eigenvalue of the covariance matrix as \cite{marsli2015bounds}:
\begin{equation}
    \lambda_{\textrm{max}}(\bm A + \bm B) \le  \lambda_{\textrm{max}}(\bm A ) +  \lambda_{\textrm{max}}(\bm B)
\end{equation}
The largest eigenvalue for the identity matrix $\mathcal{I}$ is $1$. For the rank-1 matrix $c_2 \bm w \bm w^T$ which is the outer product of the normal vector is given by:
\begin{equation}
    \lambda_{\textrm{max}}(c_2 \bm w \bm w^T)=c_2\textrm{Tr}(\bm w \bm w^T)=c_2 \bm w^T \bm w= c_2
\end{equation}
which immediately results in $ \lambda_{\textrm{max}}(\bm R) \le 1+ c_2$.
\textcolor{black}{Substituting the above values to the (12), the sample mean $  \bm{\bar \mu}_N=\frac{1}{N}\sum_{i=1}^{N}X_i $ is the result of MLE estimation, which satisfies, with probability at least $1-\delta$ 
\begin{equation}\label{eq21}
\|\bm{\bar \mu}_N - \bm{ \mu}\|\le \sqrt{\frac{d+c_2}{N}}+\sqrt{\frac{2(1+c_2)\log(1/\delta)}{N}}
\end{equation}
This bound can provide an upper bound with probability at least $1-\delta$ for the error of the sample mean while getting $N$ queries from the neural network.
}
\begin{equation}\label{eq31}
    \frac{R}{\sin({\theta_t})}=\frac{r_{t}}{\sin({\theta_{t+1}})}
\end{equation}
  \begin{figure}[!t]
	\centering
	\hspace*{-0.0in}
	\includegraphics[width=0.46\textwidth]{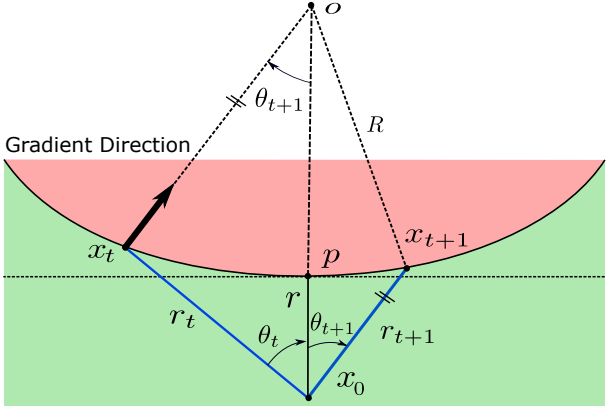}
	\vspace{-0.0in}
	\caption{Convex decision boundary with  bounded curvature.}
	\label{fig11}
\end{figure}
 \subsection*{B. Proof of Theorem 1}
  In the following subsections, we consider two cases for the curvature of the boundary.
 \subsubsection*{Convex Curved Bounded Boundary} We assume that the curvature of the boundary is convex as given in Fig.~\ref{fig11}. As given in \cite{fawzi2016robustness}, if $\theta_t$ satisfies the two assumptions $\tan^2{(\theta_t)} \le 0.2R /r$ and $r/R <1$, the value for $\|\bm x_t -\bm x_0\|_2=r_t$ is given as follows:
 \begin{equation}
     r_t=-(R-r)\cos(\theta_t)+\sqrt{(R-r)^2 \cos^2(\theta_t)+2Rr-r^2}
 \end{equation}
 where $\|\bm x_{t+1} -\bm x_0\|_2=r_{t+1}$ can be obtained in a similar way.
 It can be observed that the value of the $r_t$ is an increasing function of the $\theta_t$ because:
\begin{align}
\frac{\partial r_t}{\partial \theta_t}& =  (R-r)\sin(\theta_t) \nonumber
    \\&-\frac{(R-r)^2\cos(\theta_t)\sin(\theta_t)}{\sqrt{(R-r)^2\cos^2(\theta_t)+2Rr-r^2}},
\end{align}
 Setting $\frac{\partial r_t}{\partial \theta_t} >0$, with some manipulations one can get $2R >r$ which shows that $r_t$ is an increasing function of the $\theta_t$. Thus, if we can show that $\theta_t > \theta_{t+1}$, it means that $r_t>r_{t+1}$ which means that $r_t $ can converge to $r$.
%  As shown in Fig.~\ref{fig11}, in case $r_t=R$, then $\bm x_0$ should be on the point $p$ which means that the line from $\bm x_0$ to $\bm x_t$ is tangent to the boundary at $\bm x_t$. 
 Here, we assume that the given image is in the vicinity of the boundary $r/R<1$. The line connecting point $ o$ to $ x_0$ intersects the two parallel lines. Based on the law of sines, one can get
Since $r_t < R$, one can  conclude that $\theta_t > \theta_{t+1}$ using the sines law.  Thus, as $r_t$ is an increasing function of $\theta_t$, we can get $r_{t+1} < r_{t}$.
Thus, after several iterations, the following update rule  \begin{equation}\label{eq28}
     \bm x_{t}=\bm x_0 + r_{t} \hat {\bm w}_{N_t}
 \end{equation}
 converges to the minimum perturbation $r$.
 \begin{figure}[!t]
	\centering
	\hspace*{-0.0in}
	\includegraphics[width=0.48\textwidth]{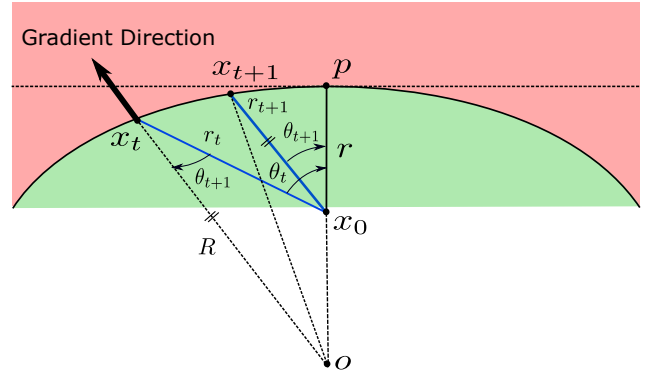}
	\label{fig:antenna_LMS}\vspace{-0.0in}
	\caption{Concave decision boundary with  bounded curvature.}
	\label{fig1}
\end{figure}
% Moreover, we can have:
% \begin{align}
%  \frac{r_t}{r_{t+1}}& = \nonumber
%     \\&\frac{(R+r)\cos(\theta_t)-\sqrt{(R+r)^2 \cos^2(\theta_{t})-2Rr-r^2}}{(R+r)\cos(\theta_{t+1})-\sqrt{(R+r)^2 \cos^2(\theta_{t+1})-2Rr-r^2}},
% \end{align}
% with some simplification:
% \begin{equation}
%     \frac{r_t}{r_{t+1}}=\frac{\cos(\theta_t)}{\cos(\theta_{t+1})}\frac{1-\sqrt{1-g(\theta_{t})}}{1-\sqrt{1-g(\theta_{t+1})}}
% \end{equation}
% where $g(\theta_t)=\frac{2Rr+r^2}{(R+r)^2\cos^2(\theta_t)}$.

Applying the sine law for $k$ iterations, one can get the following equation using \eqref{eq31}:
\begin{equation}
    \sin(\theta_{t})=\frac{\prod_{i=0}^{t}r_{i}}{R^t} \sin(\theta_0)
\end{equation}
We know that $r_t < R$ and in each iteration, it gets smaller and smaller. Thus, for the convergence, we consider the worst case. We know that $\max_{i = 0, 1, ..., t } \{r_{i}\} = r_t$. Thus,
To bound this, we can have:
\begin{equation}
    \sin(\theta_{t+K})=\frac{\prod_{k=0}^{K}r_{t+k}}{R^K} \sin(\theta_t) \le (\frac{r_t}{R})^K\sin(\theta_t)
\end{equation}
where can be reduced to
\begin{equation}
    \sin(\theta_{t+K}) \le (\frac{r_t}{R})^K\sin(\theta_t)
\end{equation}
This shows that $\sin(\theta_{t+K})$ converges to zero exponentially since $r_t < R$. Thus, $\theta_{t+k}$ goes to zero which results that the in coinciding the $r_t$ and $r$ in the same magnitude. Thus, we have
\begin{equation}
    \lim_{k\to\infty} r_{t+k} = r
\end{equation}
% \begin{lemma}
% The sequence $\{r_t\}$   converges linearly to $r$ since we have:
% \begin{equation}\label{lim}
%     \lim_{t\to\infty} \frac{r_{t+1}-r}{r_t-r}= \mu= \left(\frac{r}{R}\right)^2 <1
% \end{equation}
% \end{lemma}
We already know that
 \begin{align}
r_{t+1} = & -(R-r)\cos(\theta_{t+1}) \nonumber
    \\&+\sqrt{(R-r)^2 \cos^2(\theta_{t+1})+2Rr-r^2},
\end{align}
 Considering the cosine law, based on the figure, we can see that
  \begin{equation}
     r_{t}^2=(R+r)^2 +R^2-2R(R+r)\cos(\theta_{t+1})
 \end{equation}
 By combining the above equations and eliminating the $\cos(\theta_{t+1})$, one can get:
\begin{align}\label{eq39}
r_{t+1}& =  -(R-r)\frac{(R+r)^2+R^2-r_t^2}{2R(R+r)}\nonumber
    \\&+\sqrt{ \frac{(R-r)^2((R+r)^2+R^2-r_t^2)^2}{4R^2(R+r)^2}+2Rr-r^2},
\end{align}
Plugging \eqref{eq39} into the following limit,
\begin{equation}\label{lim}
    \lim_{t\to\infty} \frac{r_{t+1}-r}{r_t-r}
\end{equation}
for $t \to \infty$, we get $\frac{0}{0}$. Thus, using the L'Hospital's Rule, we take the derivative of the numerator and the denominator as:
\begin{align}
\frac{\partial r_{t+1}}{\partial r_t}& = - \frac{r_t(R-r)}{R(R+r)}& \nonumber
    \\+&\frac{((R+r)^2+R^2-r_t^2)r_t}{2\sqrt{ \frac{(R-r)^2((R+r)^2+R^2-r_t^2)^2}{4R^2(R+r)^2}+2Rr-r^2}}\frac{(R-r)^2}{R^2(R+r)^2}&
\end{align}
Having $t \to \infty$, we can get $r_t \to r$,  since we have $\hat r_t \to r_t$, thus:
\begin{align}
\lim_{t\to\infty} \frac{\hat r_{t+1}-r}{\hat r_t-r}  =\frac{r^2(R-r)}{R^2(R+r)} = \lambda <1
\end{align}
 As $r<R$,  the rate of convergence $\lambda \in (0,1)$ which completes the proof.

 \subsubsection*{Concave Curved Bounded Boundary}
 As in  \cite{fawzi2016robustness}, the  value for $\|\bm x_t -\bm x_0\|_2=r_t$ is given as follows:
 \begin{equation}
     r_t=(R+r)\cos(\theta_t)-\sqrt{(R+r)^2 \cos^2(\theta_t)-2Rr-r^2}
 \end{equation}
 where $\|\bm x_{t+1} -\bm x_0\|_2=r_{t+1}$ can be obtained in a similar way. It can easily be seen that the $\theta_t > \theta_{t+1}$. Assuming $r/R<1$, $r_t$ is a decreasing function with respect to $\theta_t$ which results in $r_t<r_{t+1}$. Similar proof of convergence can be obtained for this case as well. 
 
  \subsection*{C. Proof of Theorem 2}

Given the point $r_{t-1}$, the goal is to find the estimate of the $\hat r_{t}$ with limited query. Assuming the normalized version of the true gradient $ {\bm w_t}=  {\bm \mu_t} / \|  {\bm \mu_t} \|_2$, we have  
 \begin{equation}\label{davaaa8a}
\|\bm{\hat w}_{N_{t}} - \bm{ w_t}\|\le {\frac{\gamma}{\sqrt{N_{t}}}}
\end{equation}
where $\gamma = \sqrt{\textrm{Tr}{(\bm R})} + \sqrt{2 \lambda_{\max} \log(1/\delta)}$, $\bm{\hat w}_{N_t}$ is the estimated gradient at iteration $t$ and $N_t$ is the number of queries to estimate the gradient at point $\bm x_{t-1}$. 
% For simplicity, we assume that the vector  $\bm{\hat w}_{N_t} $ is also normalized and the difference of the two is due to their angle. Based on this, we can have:
% $\|\bm{\hat w}_{N_t} - \bm{ w_t}\|\approx \sqrt{2(1 - \cos(\psi))}$. Given that $\psi $ is a small angle,  the Taylor approximation for $\cos(\psi)$ is $\cos(\psi) \approx 1 + \frac{\psi^2}{2}$ which results in $\|\bm{\hat w}_{N_t} - \bm{ w_t}\| = \psi$
%  \begin{equation}\label{tetadist}
% \|\bm{\hat w}_{N_t} - \bm{ w_t}\| \approx \psi \le {\frac{\gamma}{\sqrt{N_t}}}.
% \end{equation}
Based on the reverse triangle inequality  $ \|\bm x\|-\|\bm y\| \le  \|\bm x - \bm y\|$, we can have
 \begin{equation}\label{tetassdist}
 \|\bm{\hat w}_{N_t} \| - 1  \le \|\bm{\hat w}_{N_t} - \bm{ w_t}\|  \le {\frac{\gamma}{\sqrt{N_t}}}.
\end{equation}
Multiplying by $r_t$, we have:
 \begin{equation}\label{tetadssist}
  r_t\ - {\frac{\gamma  r_t}{\sqrt{N_t}}} \le  \hat r_t    \le  r_t + {\frac{\gamma  r_t}{\sqrt{N_t}}}.
\end{equation}
where $\hat r_t= r_t  \|\bm{\hat w}_{N_t} \|$. 
  Here, we conduct the analysis in the limit sense and we observe in the simulations that it is valid in limited iterations as well. Given $r_{t-1}$, for large $t$, we have:
\begin{equation}\label{ddss}
 r_{t}-r \approx \lambda (r_{t-1}-r) 
\end{equation}
Considering the best and worst case for the estimated gradient, we can find the following bound. In particular, the best case is the case in which all the gradient errors are constructive and make the $\hat r_t$ in each iteration smaller than $r_t$. In contrast, the worst case happens when all the gradients directions are destructive and make the $\hat r_t$ greater than $r_t$. In practice, however,  what is happening is something in between. Substituting $r_t$ from \eqref{tetadssist} in \eqref{ddss}, one can obtain:
\begin{equation}
  \lambda (r_{t-1}-r) -  \frac{\gamma  r_{t}}{\sqrt{N_{t}}} \le \hat r_{t}-r \le \lambda (r_{t-1}-r) +  \frac{\gamma  r_{t}}{\sqrt{N_{t}}}
\end{equation}
 By using the iterative equation, one can get the following bound:
\begin{align}
    \lambda^{t} (r_{0}-r) - e(\bm N) \le  \hat r_{t}-r \le \lambda^{t} (r_{0}-r) +  e(\bm N)
\end{align}
where $   e(\bm N)= \gamma\sum_{i=1}^{t}\frac{\lambda^{t-i} r_{i}}{\sqrt{N_{i}}}$ is the error due to limited number of queries.

\subsection*{D. Proof of Theorem 3}
It can easily be observed that the optimization problem is convex. Thus, the duality gap between this problem and
its dual optimization problem is zero. Therefore, we can solve
the given problem by solving its dual problem. The Lagrangian is given by:
\begin{equation}
    \mathcal{L}(\bm N ,\alpha)= \sum_{i=1}^{T}  \frac{\lambda^{-i}r_i}{\sqrt{N_i}} + \alpha \left(\sum_{i=1}^{T} N_i -N \right )
\end{equation}
where $\alpha$ is the  non-negative dual variable associated
with the budget constraint. The KKT conditions  are given as follows \cite{boyd2004convex}:
\begin{align}\label{kkt1}
 &&& \frac{\partial   \mathcal{L}(\bm N ,\alpha)}{\partial N_t}=0, ~ \forall i\\ \label{kkt2}
& && \alpha \left(\sum_{i=0}^{T}N_i -N \right)=0  \\ \label{kkt3}
&&& \sum_{i=1}^{T} N_i \le N  
\end{align}
Based on \eqref{kkt1}, taking the derivative and setting equal to zero, we can have 
\begin{equation}\label{query}
   N_t =\left(\frac{\lambda^{-t}r_t}{2\alpha}\right)^{\frac{2}{3}} 
\end{equation}
 We see that the constraint holds with equality. Assume that $\sum_{i=0}^{t} N_i \ne N  $, then based on \eqref{kkt2}, $\alpha = 0$. If $\alpha=0$ then based on \eqref{query}, we have $N_i= \infty, ~\forall i$ which contradicts with \eqref{kkt3}.
 Substituting \eqref{query} in $\sum_{i=0}^{t} N_i = N  $, the Lagrangian multiplier can be obtained as
 \begin{equation}
     \alpha^{\frac{2}{3}}= \frac{1}{2^{\frac{2}{3}}}\frac{\sum_{i=1}^T(\lambda^{-i}r_i )^{\frac{2}{3}}}{N}
 \end{equation}
 Substituting $\alpha$ in \eqref{query}, one can get the optimal number of queries as:
 \begin{equation}
     N_t^*= \frac{( \lambda^{-t} r_t)^{\frac{2}{3}}}{\sum_{i=1}^T(\lambda^{-i} r_i)^{\frac{2}{3}}}
     N
 \end{equation}
 For $t \to \infty$, we have $r_t \to r $. Based on this, the ratio of the optimal number if queries for each iteration is given by:
  \begin{equation}
     N_t^* \approx \frac{ \lambda^{-\frac{2}{3}t}}{\sum_{i=1}^T\lambda^{-\frac{2}{3}i}}
     N
 \end{equation}
%  \begin{equation}
%      \frac{N_{i+1}^*}{N_{i}^*} = \left(\frac{r_{i+1}}{r_{i}}\right)^{\frac{2}{3}}\lambda^{-\frac{2}{3}}
%  \end{equation}
 This equation shows that the distribution of the queries should be increased by a factor of $\lambda^{-\frac{2}{3}}$ where $0< \lambda < 1$. By approximation, we have
  \begin{equation}
     \frac{N_{t+1}^*}{N_{t}^*} \approx \lambda^{-\frac{2}{3}}
 \end{equation}
 which completes the proof. 
%  we can have 
% \begin{equation}
%     -\frac{\lambda^{-i}r_i}{2N_i^{\frac{3}{2}}}+\alpha=0
% \end{equation}

  \begin{figure*}[ht]
\centering
  \includegraphics[width=\textwidth,height=20.3cm]{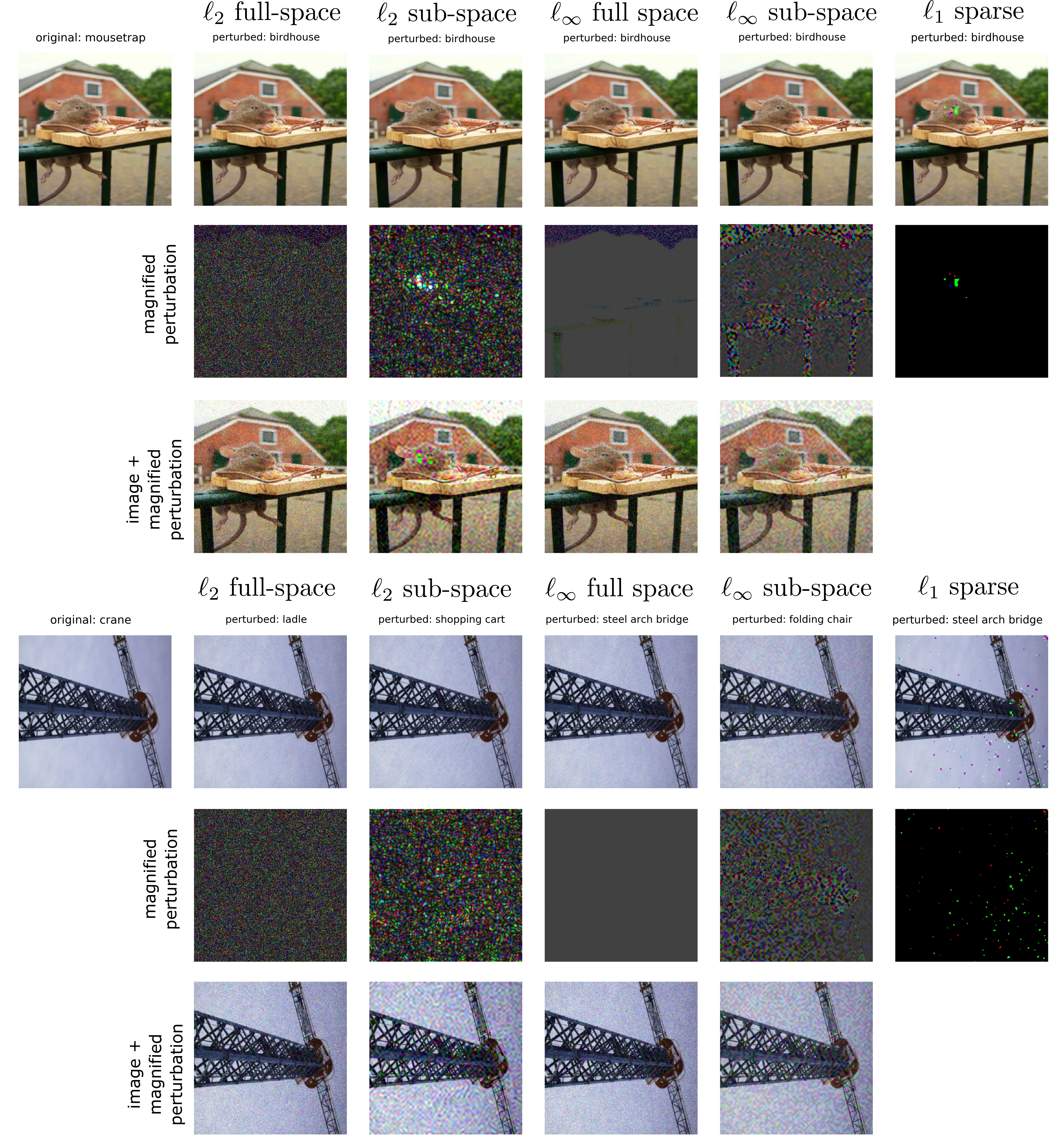}
  \caption{Original images and adversarial perturbations generated by GeoDA for $\ell_2$ fullspace, $\ell_2$ subspace, $\ell_\infty$ fullspace, $\ell_\infty$ subspace, and $\ell_1$ sparse with $N=10000$ queries.}
  \label{fig111}
\end{figure*}

  \begin{figure*}[ht]
\centering
  \includegraphics[width=\textwidth,height=20.3cm]{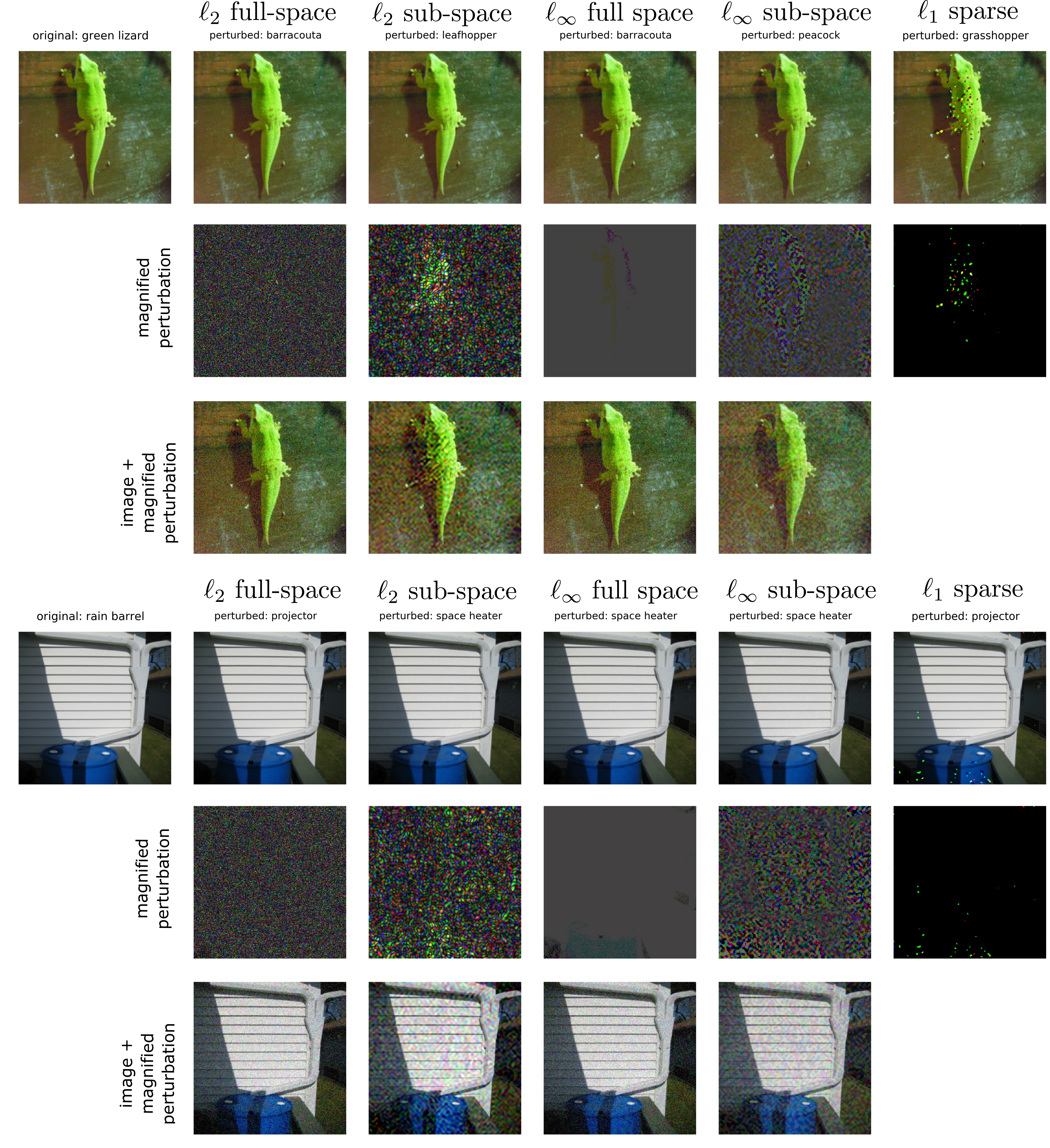}
  \caption{Original images and adversarial perturbations generated by GeoDA for $\ell_2$ fullspace, $\ell_2$ subspace, $\ell_\infty$ fullspace, $\ell_\infty$ subspace, and $\ell_1$ sparse with $N=10000$ queries.}
    \label{fig222}
\end{figure*}
  \begin{figure*}[ht]
\centering
  \includegraphics[width=\textwidth,height=20.3cm]{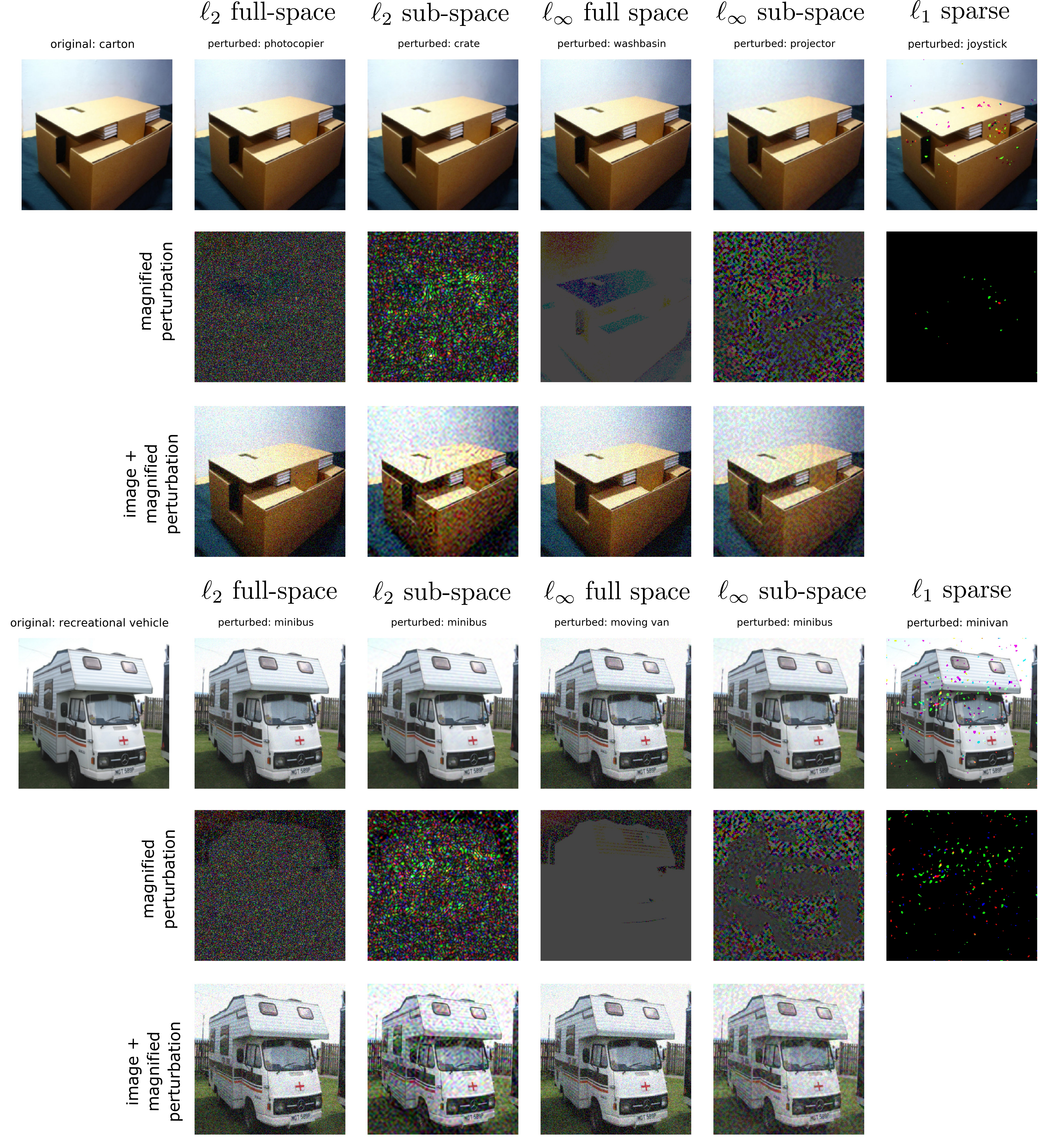}
  \caption{Original images and adversarial perturbations generated by GeoDA for $\ell_2$ fullspace, $\ell_2$ subspace, $\ell_\infty$ fullspace, $\ell_\infty$ subspace, and $\ell_1$ sparse with $N=10000$ queries.}
      \label{fig333}
\end{figure*}
  \begin{figure*}[ht]
\centering
  \includegraphics[width=\textwidth,height=20.3cm]{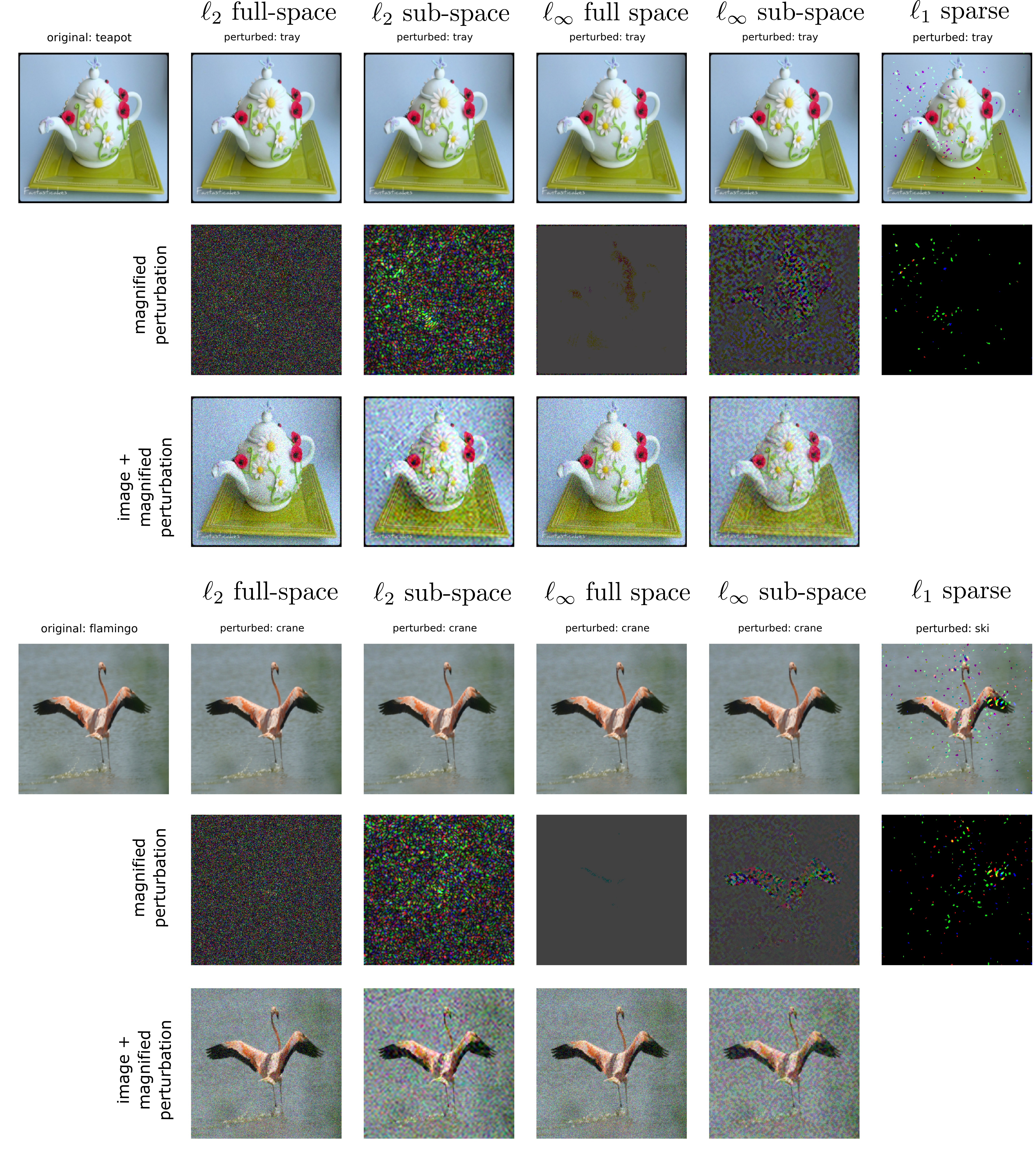}
  \caption{Original images and adversarial perturbations generated by GeoDA for $\ell_2$ fullspace, $\ell_2$ subspace, $\ell_\infty$ fullspace, $\ell_\infty$ subspace, and $\ell_1$ sparse with $N=10000$ queries.}
      \label{fig444}
\end{figure*}

 \section{Additional experiment results}
 Here we show more experiments on the performance of GeoDA on different $\ell_p$ norms. In Figs.~\ref{fig111}, Figs.~\ref{fig222}, Figs.~\ref{fig333}, and Figs.~\ref{fig444}, we have generated adversarial examples using GeoDA. For each image, the first row consists of (from left to right) original image, $\ell_2$ fullspace  adversarial example, $\ell_2$ subspace  adversarial example, $\ell_\infty$ fullspace  adversarial example, $\ell_\infty$ subspace  adversarial example, and $\ell_1$ adversarial example, respectively. However, as can be seen the perturbations are not quite visible in the actual adversarial examples in the first row. In the second row, we show the magnified version of perturbations for $\ell_2$ and $\ell_\infty$. To do so, the norm of all the perturbations is magnified to 100 given that the images coordinate normalized to the 0 to 1 scale. For the sparse case, we do not magnify the perturbations as they are visible and equal to their maximum (minimum) values. Finally, in the third row, we added a magnified version of the perturbation with norm of 30 to have a better visualization. 
 \begin{table} [h]
\centering % Centres the table on the page, comment out to left-justify
\begin{tabular}{c | c |c c c c  } % The final bracket specifies the number of 
\toprule % Top horizontal line

 & \small{\textbf{Queries}} & \small{\textbf{ ResNet-50}} & \small{\textbf{ ResNet-101} }    \\

\midrule  \midrule

  & \small{500 } & \small{11.76}  & \small{17.91}  \\
{{ GeoDA ($\ell_2$)}} & \small{2000 }  &   \small{3.35} & \small{6.38}   \\
  & \small{10000 } & \small{1.06}  & \small{1.87}  \\% Content row 1

\bottomrule
\end{tabular}

\caption{The performance comparison of  GeoDA on different ResNet image classifiers.} 
\label{netsss} 
\end{table}

 In Table \ref{netsss}, we have compared the performance of GeoDA with different deep network image classifiers.  The proposed algorithm GeoDA follows almost the same trend on a wide variety of deep networks.  The reason is that the core assumption of GeoDA, i.e. boundary has a low mean curvature in vicinity of the datapoints,  is verified empirically for a wide variety of deep networks. We can provide the experimental results on different networks.

\end{document}